\documentclass[10pt]{article}
\usepackage[margin=1in,columnsep=0.25in]{geometry}
\usepackage{amsmath,amssymb,amsthm}
\usepackage{graphicx}
\usepackage{booktabs}
\usepackage{multirow}
\usepackage{subcaption}
\usepackage{xcolor}
\usepackage{hyperref}
\usepackage{algorithm}
\usepackage{algpseudocode}
\usepackage{microtype}
\usepackage{natbib}
\usepackage{array}
\usepackage{float}

\definecolor{myblue}{RGB}{0,51,102}
\hypersetup{colorlinks=true,linkcolor=myblue,citecolor=myblue,urlcolor=myblue}
\newtheorem{theorem}{Theorem}

\title{\Large\textbf{HiFA4: Training-Free 4-bit FlashAttention\\
on Ascend HIF4 NPUs for LLM Inference}}

\author{
  \small
  Hui Dong$^{*}$ \quad
  Yanzhao Li \quad  
  Jie Gao \quad  
  Chunlu Li \\[3pt]
  \small
  Zhiyuan Zhang \quad
  Yupeng Sun \quad
  Zhenyuan Chen\quad
  Zhiqiang Zou\\[3pt]
  \normalsize Huawei Technologies \\
  \small \texttt{donghui24@huawei.com}
}

\date{}

\begin{document}
\maketitle

\begin{abstract}
We present HiFA4, a post-training operator-level design that executes
both the $QK^T$ and $PV$ matrix multiplications in FlashAttention as
4-bit HIF4 Cube GEMMs for LLM inference on Ascend NPUs, while
maintaining online softmax state in FP16.
To our knowledge, HiFA4 is the first such design evaluated on standard
NLP benchmarks.

We introduce two complementary mechanisms.
\textbf{Smooth-QK} applies a calibration-static, per-channel equivalent
rescaling to Q and K after RoPE, transferring quantization difficulty
from K to Q.
Because the target channels are determined by fixed model parameters
rather than input data, the scale factors are computed once from a small
offline calibration set and require no per-tile online reduction at
inference (only a fusible element-wise scaling).
\textbf{P-Reordering} ensures that the softmax normalizer is accumulated
from the same quantized attention weights $\hat{P}$ used in the $PV$
GEMM, rather than a higher-precision reconstruction.
We prove (Theorem~\ref{thm:main}) that the conventional inconsistent
formulation introduces a systematic, coherent output-scaling error, and
validate empirically on a Qwen3-8B Layer-0 MMLU trace that this error is
present in 100\% of 3.6~million real attention tiles with median
magnitude $\bar\varepsilon=-0.064$.
P-Reordering additionally allows the normalizer accumulation to be fused
into the $PV$ Cube GEMM.

Evaluated on five LLMs, HiFA4 consistently reduces quantization-induced
decision drift.
On Qwen3-8B, HiFA4 recovers 37.5\% of the accuracy gap introduced by
direct HIF4 quantization (sample-weighted $\delta_w$ narrowing from
1.12\,pp to 0.70\,pp, Table~\ref{tab:calib}),
reduces the fraction of BF16-inconsistent predictions on MMLU from 16.3\%
to 8.2\%, and cuts accuracy regressions (samples correctly answered by
BF16 but incorrectly by the quantized model) by 57\%
($1{,}071\rightarrow465$ on MMLU full set).
On Gemma2-9B, mild smoothing keeps HiFA4 within 0.7~pp of BF16 while
cutting MMLU accuracy regressions by 27\% relative to direct quantization.
On the three models where the applicability gate disables Smooth-QK
(LLaMA3.1-8B, Mistral-7B, Phi-4B), the remaining HiFA4 components---
P-Reordering together with the adopted Q-Mean auxiliary---reduce full-set
MMLU accuracy regressions by 41--52\% (and HellaSwag by 39--42\%) with no
K-smoothing, showing that
this part of the design generalizes beyond models with concentrated
K-outliers.
A theoretical instruction-scheduling analysis projects a 35.4\%
critical-path latency reduction relative to BF16 by fusing the
softmax normalizer into the $PV$ Cube GEMM via P-Reordering;
on-hardware validation will be reported when the target
Ascend NPU becomes publicly available.
\end{abstract}

\section{Introduction}
\label{sec:intro}

FlashAttention~\citep{dao2022fa,dao2024fa2,shah2024fa3,zadouri2026fa4}
is the widely-adopted tiled attention kernel for LLM inference.
Its two dominant matrix multiplications, $QK^T$ and $PV$, constitute
the majority of attention compute in the prefill stage.
Executing both as 4-bit Cube GEMMs using Ascend HIF4 would substantially
improve throughput, since 4-bit operands allow the Cube units to process
four times as many elements per cycle compared to BF16.
The challenge is preserving accuracy: direct 4-bit quantization of
attention activations causes substantial degradation in language models.

Prior FP4 attention work has mainly focused on diffusion and
video-generation models.
SageAttention3~\citep{zhang2025sage3} reports FP4 inference results
on CogVideoX, HunyuanVideo, Mochi, Flux, and SD3.5, but its
NLP-related experiments do not correspond to a full post-training FP4
inference evaluation.
Concurrent work targets NVIDIA NVFP4 rather than Ascend HIF4 and uses
different correction strategies: online scale search with
mixed-precision KV caching~\citep{gupta2026scalesearch}, or selective
mixed precision~\citep{sharratt2026thrift}.

We make two technical contributions:
(1)~\textbf{Smooth-QK}: calibration-static suppression of
QK-RMSNorm-induced K-activation outliers with no per-tile online
reduction (only a fusible element-wise scaling); and
(2)~\textbf{P-Reordering}: proof and elimination of a coherent output
error introduced by normalizer--GEMM inconsistency, plus a Cube-path
efficiency benefit.
Alongside standard accuracy, we report prediction flip rate and
per-sample accuracy regressions relative to BF16, which expose
distributional shifts in decision boundaries that aggregate accuracy
can mask.

\section{Related Work}

\paragraph{FlashAttention.}
FA1--FA4~\citep{dao2022fa,dao2024fa2,shah2024fa3,zadouri2026fa4}
optimize the exact tiled attention kernel.
HiFA4 addresses the numerical accuracy of its GEMMs rather than
kernel scheduling or memory layout.

\paragraph{LLM quantization.}
Weight-only post-training methods (GPTQ~\citep{frantar2022gptq},
AWQ~\citep{lin2024awq}) and fine-tuning-based approaches
(QLoRA~\citep{dettmers2024qlora}) target static model weights.
SmoothQuant~\citep{xiao2023smooth} and QuaRot~\citep{ashkboos2024quarot}
reduce model-level activation or weight outliers.
KIVI~\citep{liu2024kivi} quantizes the KV cache.
HiFA4 operates inside the fused FA kernel on Q, K, V, and $P$.

\paragraph{Quantized attention.}
\citet{zhang2025sage2} introduced Q-Mean compensation for sub-8-bit
attention; we adopt this technique as an auxiliary component.
\citet{zhang2025sage3} demonstrated the first FP4 FA kernel on NVIDIA
Blackwell, evaluated on image and video generation.
\citet{gupta2026scalesearch} and \citet{sharratt2026thrift} address
NVFP4 for causal LM and long-context, respectively, on NVIDIA hardware.
HiFA4 targets Ascend HIF4's three-level hierarchical format and derives
its corrections from a parameter-level root-cause analysis (Smooth-QK)
and an algebraic consistency argument (P-Reordering).

\section{Background}

\paragraph{HIF4.}
The 4-bit hierarchical block-floating-point format from~\citet{huawei2026hif4}
for Ascend NPUs packs 64 elements per unit with 32 bits of shared scaling
metadata (averaging 4.5 bits per value).
The metadata defines a three-level scaling hierarchy: a level-1 global
base scale (E6M2) per 64-element group, plus level-2 (8-way) and level-3
(16-way) 1-bit micro-exponents that refine the intra-group dynamic range;
each 4-bit element is a sign-magnitude S1P2 value (equivalent to E1M2,
3-bit significand).
All QDQ operations follow the official specification; we use the
BF16$\to$HIF4 conversion of Algorithm~1 in the HIF4 specification~\citep{huawei2026hif4} without
modification.
We use ``4-bit'' to refer to the per-element compute operand width;
storage accounting includes the 32-bit per-group metadata and is
therefore 4.5 bits per value for HIF4.

\paragraph{FlashAttention tiling.}
Standard FlashAttention maintains a row-wise running maximum $m_i$ and
normalizer $\ell_i$ updated per tile.
For each tile $(i,j)$: update the running max
$m_i^{\mathrm{new}}=\max(m_i,\mathrm{rowmax}(S_{ij}))$;
rescale history
$O_i\leftarrow O_i\exp(m_i-m_i^{\mathrm{new}})$,
$\ell_i\leftarrow\ell_i\exp(m_i-m_i^{\mathrm{new}})$,
$m_i\leftarrow m_i^{\mathrm{new}}$;
then compute $P_{ij}=\exp(S_{ij}-m_i)$,
accumulate $O_i\mathrel{+}=P_{ij}V_j$,
$\ell_i\mathrel{+}=\mathrm{rowsum}(P_{ij})$,
and finalize $O_i\leftarrow O_i/\ell_i$.
The vector path handles the softmax, rescaling, and normalizer;
the Cube path handles the two GEMMs ($QK^T$ and $PV$).
In method descriptions we write ``$P_{ij}=\exp(S_{ij}-m_i)$'' with $m_i$
denoting the post-rescaling running maximum for the current tile.

\paragraph{C4V16-Aux.}
Our scheme (C4V16-Aux) quantizes both $QK^T$ and $PV$ to HIF4.
The online softmax state is FP16.
An auxiliary Q-Mean GEMV~\citep{zhang2025sage2}, acknowledged as
adopted from prior work, operates in FP16 at $<0.75\%$ of
$QK^T$ Cube cost.

\section{Method}
\label{sec:method}

\subsection{K-Activation Outliers: Origin and Structural Properties}
\label{sec:topology}

A necessary condition for accurate 4-bit GEMM computation is that the
input activations lie within a range that the quantizer's group scale
can represent with sufficient resolution.
HIF4 assigns a level-1 base scale to every contiguous block of 64
elements (refined by level-2/level-3 micro-exponents): if one element in
a group has a magnitude substantially larger than the rest, the base
scale is dominated by that element and the remaining elements are
quantized with effectively fewer useful bits, beyond what the per-element
micro-exponents recover.
We begin by characterizing the structure of the K-activation outliers
that violate this condition in practice.
In practice, a small number of channels in K can carry magnitudes an
order of magnitude larger than the bulk, forcing the group base scale to
accommodate the extreme value at the cost of coarser quantization for
the other channels in the group.
Although such sparse outliers leave aggregate accuracy largely intact,
they enlarge the per-tile attention error and thereby amplify
\emph{decision-level} drift---the prediction flips and regressions that
our metrics (Section~\ref{sec:exp}) are designed to expose---which is the
effect Smooth-QK targets.
Understanding \emph{where} this outlier originates is essential for
designing a correction that is both effective and deployment-friendly.

Figure~\ref{fig:causal} traces the computation path leading to the
outlier at Layer~0, channel~51 of Qwen3-8B.
The output of the K-projection layer is well-behaved, with magnitude
approximately~1.08.
However, QK-RMSNorm then divides by a small per-token RMS value and
multiplies by the per-channel affine weight $\gamma_{K,51}$; with
$\gamma_{K,51}\approx34$, this amplification alone pushes the channel
magnitude to approximately~155, saturating HIF4's group base scale.

\begin{figure}[htbp]
\centering
\includegraphics[width=0.66\linewidth]{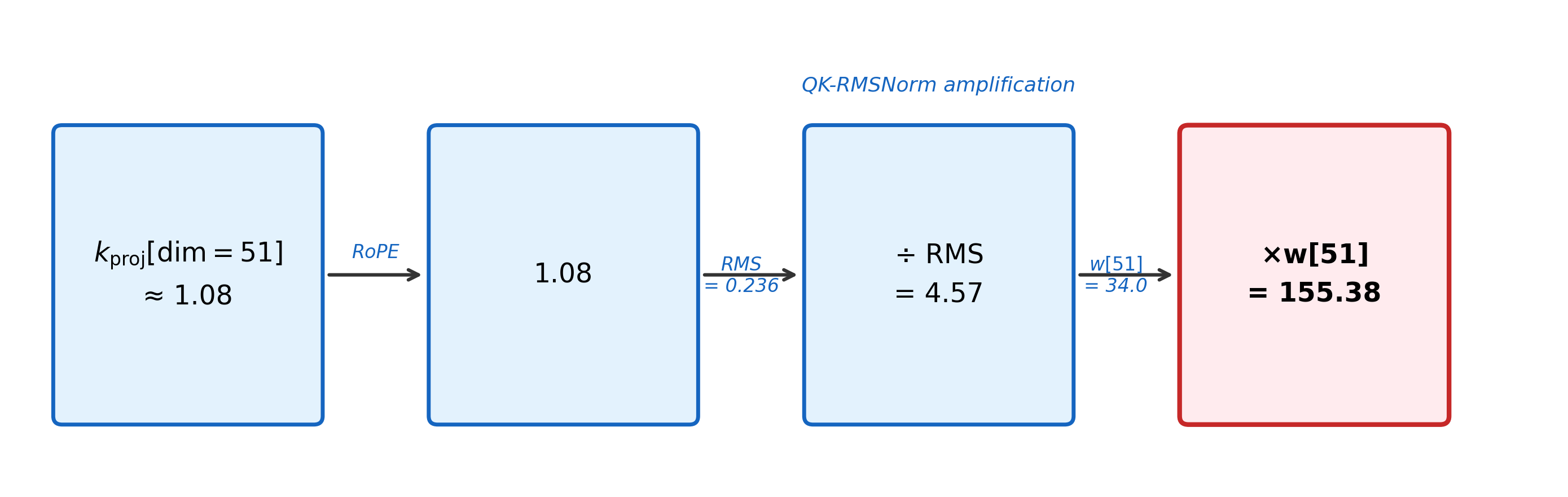}
\caption{Computation chain leading to the K-activation outlier at
Layer~0, channel~51 of Qwen3-8B.
The projection output is normal ($\approx$1.08); the large value arises
entirely within QK-RMSNorm through the combination of a small per-token
RMS normalizer and the large affine weight $\gamma_{K,51}\approx34$.}
\label{fig:causal}
\end{figure}

The key observation is that $\gamma_{K,51}$ is a \emph{fixed model
parameter}, not a property of any particular input token.
Figure~\ref{fig:weight51} shows $\gamma_{K,51}$ across all 36 layers:
it is an isolated outlier at Layer~0, roughly $12\times$ larger than
the layer mean ($=2.87$).
All other layers carry values within normal range, which means the
quantization difficulty is highly concentrated and not a generic property
of the model's arithmetic.

\begin{figure}[htbp]
\centering
\begin{minipage}[t]{0.4\linewidth}
\centering
\includegraphics[width=\linewidth]{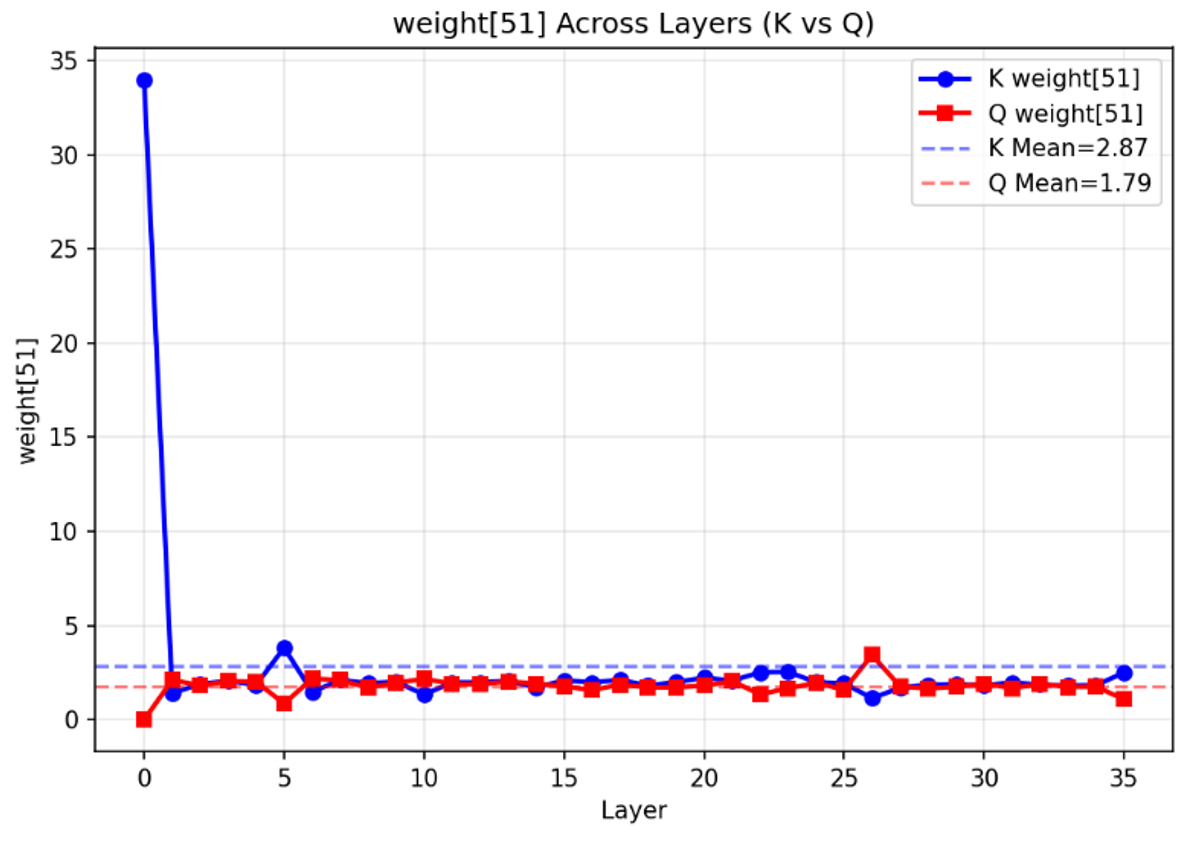}
\caption{QK-RMSNorm affine weight at channel~51 for K (blue) and Q
(red) across 36 layers of Qwen3-8B.
Layer~0 K weight ($\approx34$) is far outside the range of all other
values (K layer mean $=2.87$, Q layer mean $=1.79$).}
\label{fig:weight51}
\end{minipage}
\hfill
\begin{minipage}[t]{0.53\linewidth}
\centering
\includegraphics[width=\linewidth]{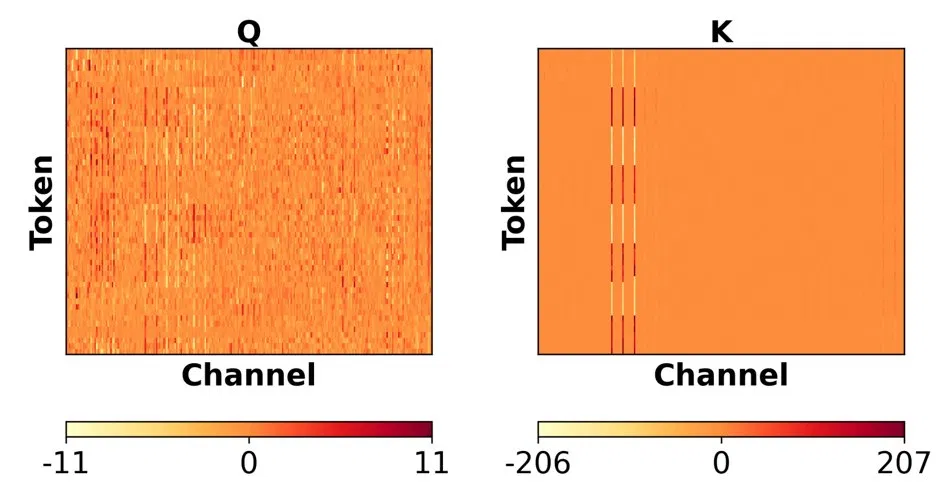}
\caption{Q (left) and K (right) head-by-channel maximum-magnitude
heatmap at Layer~0.
K shows a uniform vertical stripe at channel~51 across all heads,
consistent with a parameter-level broadcast cause.
Q shows no comparable pattern.}
\label{fig:heatmap}
\end{minipage}
\end{figure}

Because $\gamma_{K,51}$ is a fixed parameter, the resulting outlier
is broadcast identically to every token and---after GQA expansion---to
every attention head.
Figure~\ref{fig:heatmap} visualizes this at the activation level: the
K heatmap shows a \emph{head-invariant vertical stripe} at channel~51,
uniform across all 32 heads, while Q exhibits a sparse and
head-varying pattern with no comparable concentration.
This spatial signature is diagnostic of a parameter-level broadcast
cause: if the outlier were input-dependent, its magnitude would vary
across heads and tokens.
The head-invariance observed here means that a single per-channel
scalar $\hat{s}_{51}$ is sufficient to suppress the outlier for the
entire layer, without needing per-head or per-token corrections.

\begin{table}[htbp]
\centering\small
\caption{Outlier diagnostic statistics across five models.
$\rho_K=\max|K|/P99(|K|)$: intra-K outlier severity.
$T_{K/Q}=\max|K|/\max|Q|$: K-to-Q dynamic-range asymmetry.
Both metrics are max-layer values.
Jaccard: outlier channel stability across inputs.
The gate applies Smooth-QK only when K-outliers are both concentrated
($\rho_K\geq6$) and dominant over Q ($T_{K/Q}\geq2$); within that region
$T_{K/Q}>3$ selects Strong and $2\leq T_{K/Q}\leq3$ selects Mild.
All other models are bypassed.
Thresholds are calibrated on these five models (Section~\ref{sec:threshold})
and validated against the per-model ablations.}
\label{tab:outlier}
\setlength{\tabcolsep}{3.5pt}
\begin{tabular}{lrrrll}
\toprule
Model & $\rho_K$ & $T_{K/Q}$ & Jaccard & Gate condition & Action\\
\midrule
Qwen3-8B    & 14.15 & 14.47 & 0.707 & $\rho_K\geq6$ \& $T_{K/Q}>3$ & Strong ($\alpha=0.5$)\\
Gemma2-9B   &  8.11 &  2.17 & 0.676 & $\rho_K\geq6$ \& $2\leq T_{K/Q}\leq3$ & Mild ($\alpha=0.25$)\\
LLaMA3.1-8B &  4.78 &  2.70 & 0.679 & $\rho_K<6$                  & Bypass ($s=1$)\\
Mistral-7B  &  3.56 &  1.63 & 0.887 & $\rho_K<6$ \& $T_{K/Q}<2$   & Bypass ($s=1$)\\
Phi-4B      &  3.69 &  1.50 & 0.786 & $\rho_K<6$ \& $T_{K/Q}<2$   & Bypass ($s=1$)\\
\bottomrule
\end{tabular}
\end{table}

Table~\ref{tab:outlier} quantifies the outlier across five models.
Figure~\ref{fig:kstats} shows the layer-wise K profile for three
representative models (Qwen3-8B, Gemma2-9B, LLaMA3.1-8B); the two bypass
models Mistral-7B and Phi-4B have flat profiles with no dominant layer
and are omitted.

\begin{figure}[htbp]
\centering
\begin{subfigure}[b]{0.49\linewidth}
\includegraphics[width=\linewidth]{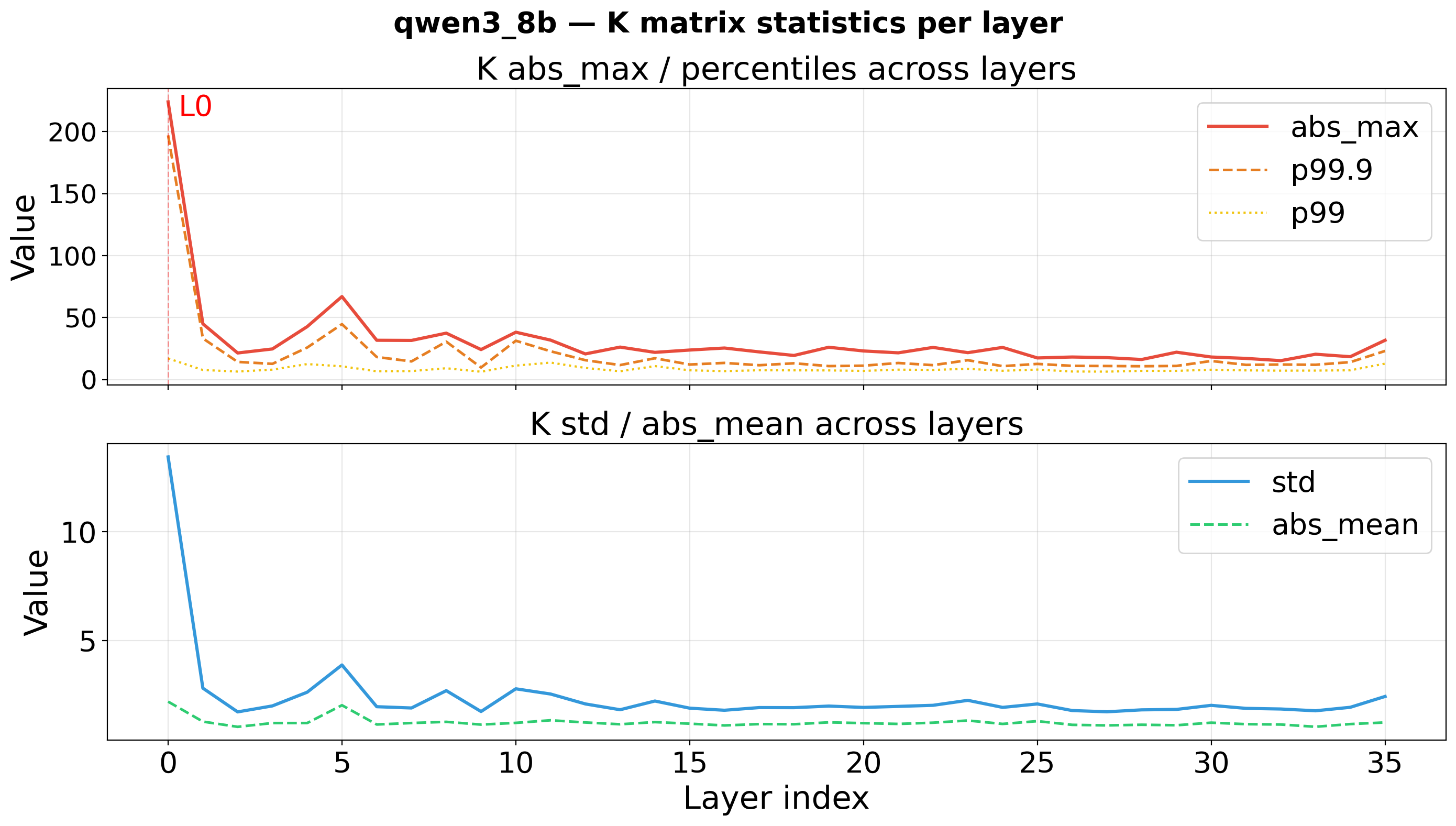}
\caption{Qwen3-8B}
\end{subfigure}
\begin{subfigure}[b]{0.49\linewidth}
\includegraphics[width=\linewidth]{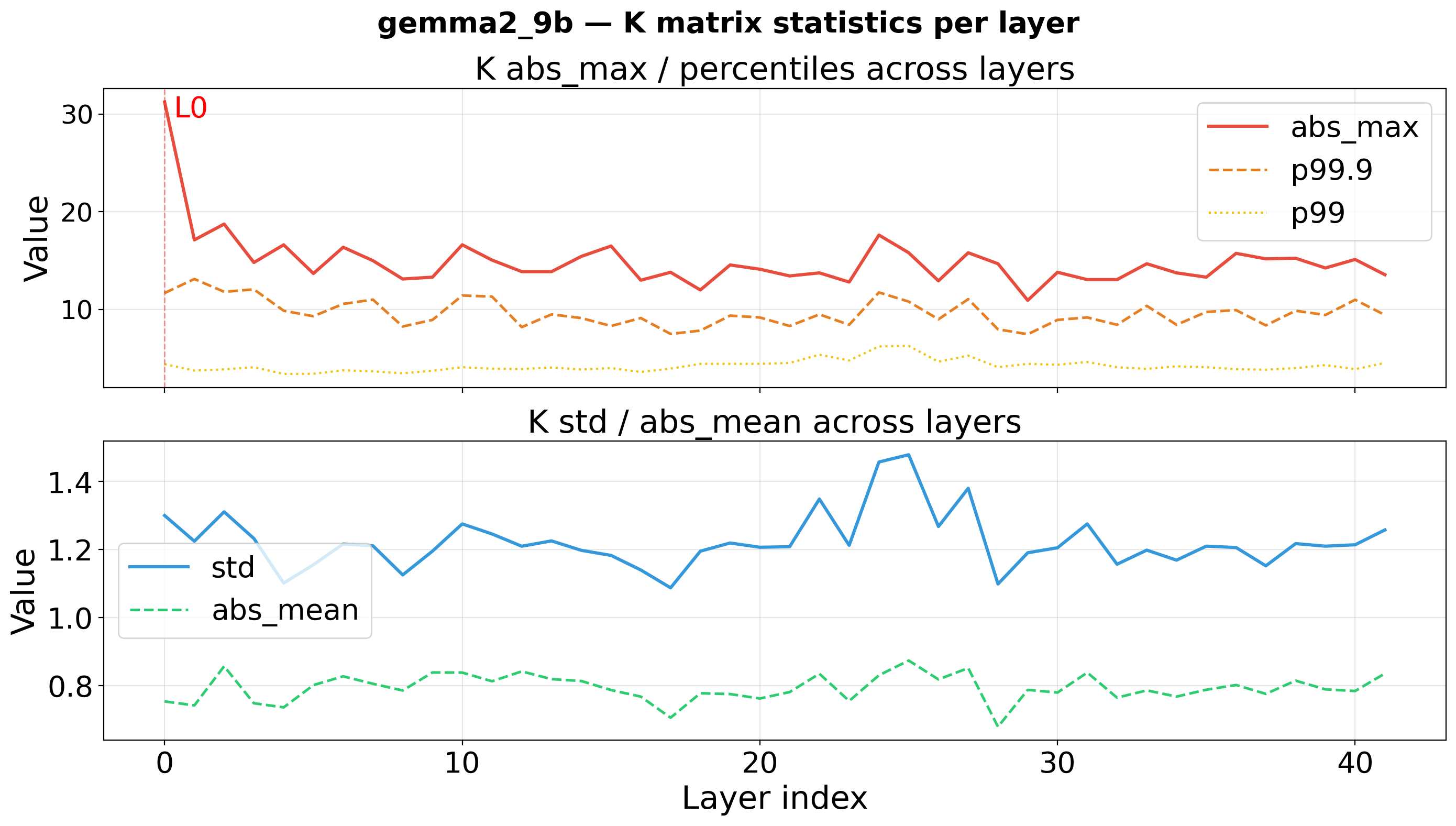}
\caption{Gemma2-9B}
\end{subfigure}
\begin{subfigure}[b]{0.48\linewidth}
\includegraphics[width=\linewidth]{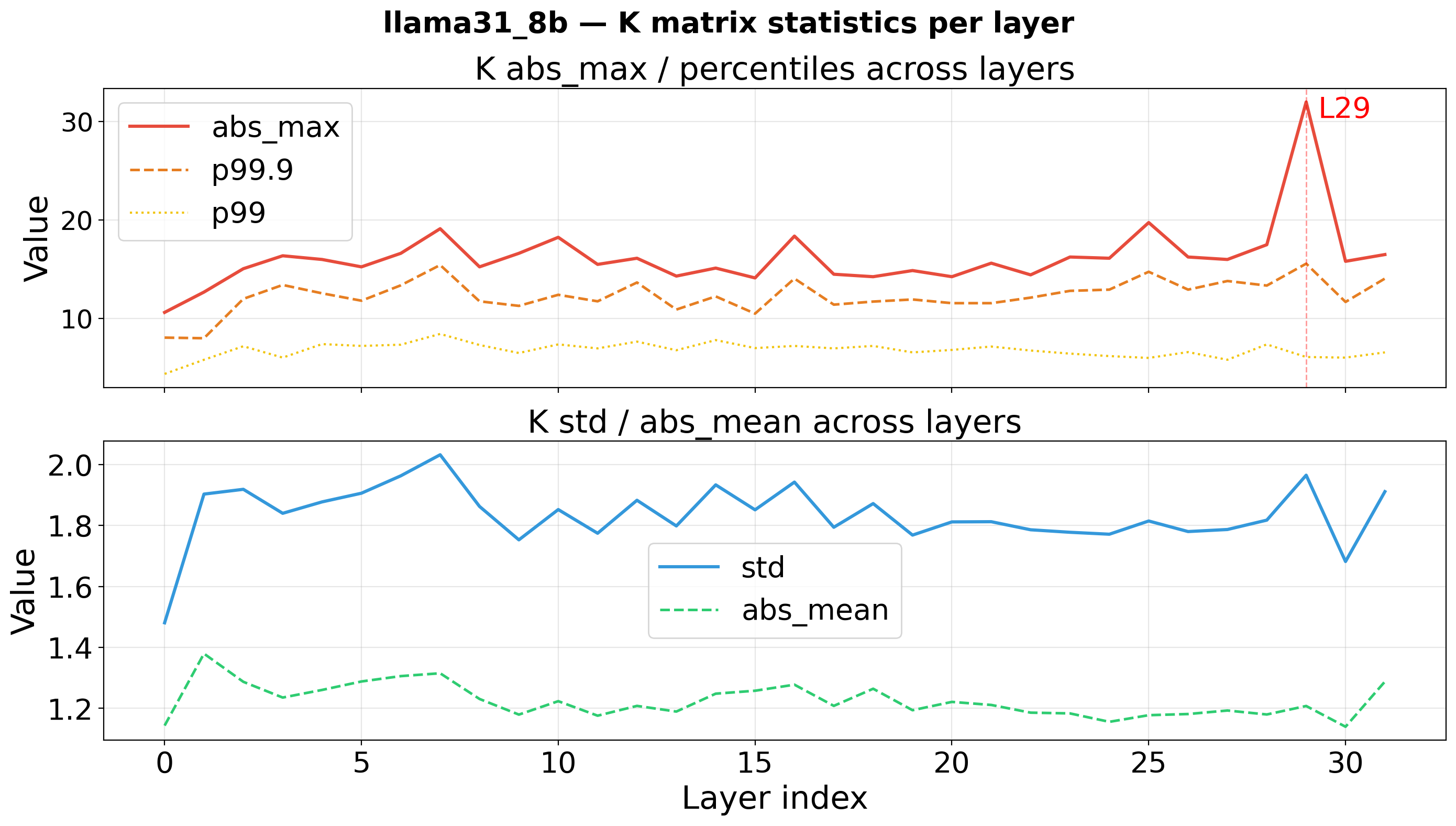}
\caption{LLaMA3.1-8B}
\end{subfigure}
\caption{Layer-wise K statistics (max, p99.9, p99, std) for three models.
Qwen3-8B has an isolated dominant Layer-0 spike; Gemma2-9B has a moderate
Layer-0 peak; LLaMA3.1-8B has no concentrated outlier layer.}
\label{fig:kstats}
\end{figure}

\subsection{Smooth-QK}
\label{sec:sqk}

Applied post-RoPE:
\begin{equation}
\tilde{q}_c = q_c \times \hat{s}_c,\quad
\tilde{k}_c = k_c \div \hat{s}_c,\quad
\hat{s}_c = \frac{|K_{\max,c}|^\alpha}{|Q_{\max,c}|^{1-\alpha}},
\quad \alpha \in \{0.5, 0.25\}.
\label{eq:sqk}
\end{equation}
In matrix form: $\tilde{Q}=Q\,\mathrm{diag}(\hat{s})$,
$\tilde{K}=K\,\mathrm{diag}(\hat{s})^{-1}$, preserving
$\tilde{Q}\tilde{K}^T=QK^T$.
The exponent $\alpha$ controls how much difficulty is transferred from K
to Q and is set by the gate (Section~\ref{sec:threshold}): the Strong
branch uses $\alpha=0.5$ and the Mild branch $\alpha=0.25$.
With $\alpha=0.5$, $\hat{s}_c=\sqrt{|K_{\max,c}|/|Q_{\max,c}|}$, which
equalizes the transformed magnitudes $|K_{\max,c}|/\hat{s}_c$ and
$|Q_{\max,c}|\,\hat{s}_c$ at their common value
$\sqrt{|K_{\max,c}|\,|Q_{\max,c}|}$ (the geometric mean of the two
magnitudes), the choice that minimizes
$\max(|\tilde{k}_c|,|\tilde{q}_c|)$.
A smaller $\alpha$ (Mild) transfers proportionally less difficulty to Q,
which is preferable when $T_{K/Q}$ is moderate.

Equation~\eqref{eq:sqk} is defined \emph{per channel} on the calibration
statistics $|K_{\max,c}|,|Q_{\max,c}|$ aggregated over the calibration
set (Section~\ref{sec:static}).
The magnitudes quoted in Figure~\ref{fig:smooth_effect} are a direct
visualization of one fixed attention slice (Layer~0, a single batch
element and a single head, first 64 tokens) and are \emph{not} the
inputs to Eq.~\eqref{eq:sqk}: ``before'' is the measured max-abs of that
slice's K, and ``after'' is the measured max-abs of the same slice
\emph{after} Smooth-QK has actually been applied, not a value obtained by
dividing the before-value by a scale factor.
Because the channel-51 outlier magnitude varies across heads, tokens, and
inputs, the value seen in this particular slice (104.5) differs from the
calibration-aggregate maximum reported in Table~\ref{tab:smooth_layers}
(220.8) and from the single-token forward trace in
Figure~\ref{fig:causal} ($\approx$155); these are three different
statistics of the same underlying parameter-driven outlier and are not
expected to coincide.
Appendix~\ref{app:smooth_dist} states the exact statistic behind each
reported number.

\begin{figure}[htbp]
\centering
\begin{subfigure}[b]{0.49\linewidth}
\includegraphics[width=\linewidth]{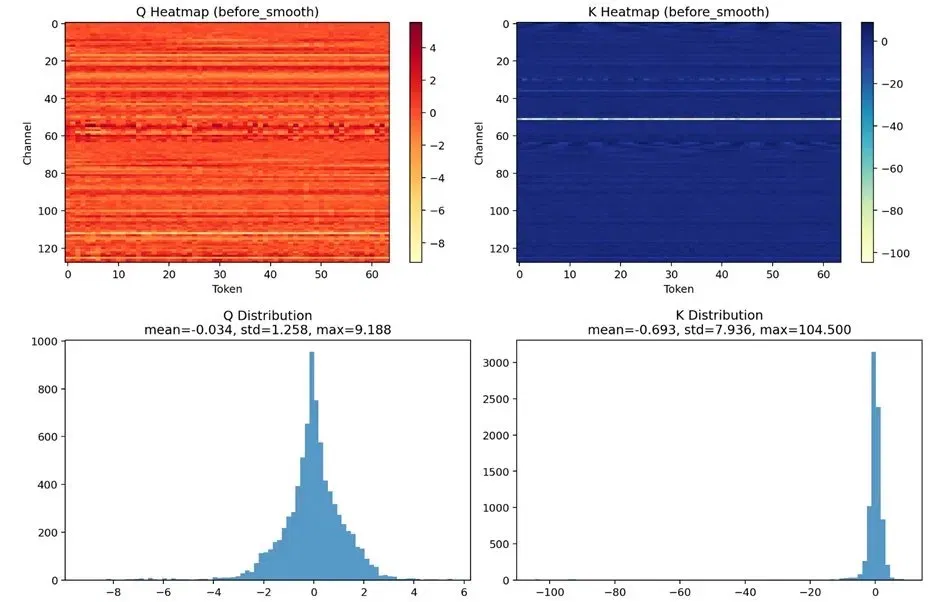}
\caption{Before: K max-abs $=104.5$.}
\end{subfigure}
\begin{subfigure}[b]{0.49\linewidth}
\includegraphics[width=\linewidth]{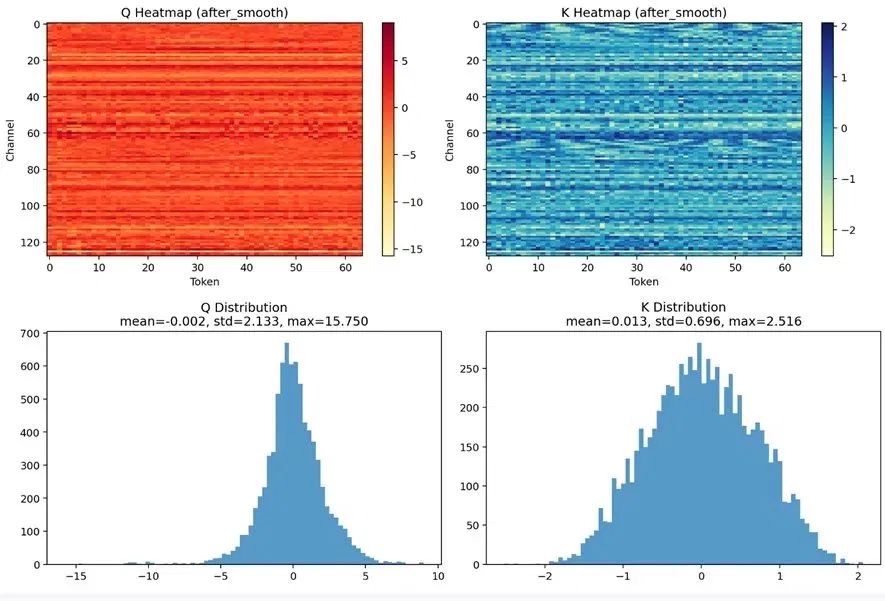}
\caption{After: K max-abs $=2.516$.}
\end{subfigure}
\caption{K activations before and after Smooth-QK for one fixed attention
slice (Qwen3-8B, Layer~0, a single batch element, head~0, first 64
tokens).
``After'' is the measured max-abs of the slice once Smooth-QK has been
applied, not a value computed from the scale factor.
The channel-51 stripe is suppressed; K becomes amenable to 4-bit
quantization.
This per-slice max-abs is one of three distinct statistics of the same
outlier used in the paper (per-slice here, calibration-aggregate in
Table~\ref{tab:smooth_layers}, single-token trace in
Figure~\ref{fig:causal}); see Appendix~\ref{app:smooth_dist}.}
\label{fig:smooth_effect}
\end{figure}

Smooth-QK is layer- and channel-agnostic: the calibration procedure
evaluates every layer and channel.
The Qwen3-8B Layer-0 case is the clearest motivating example but not
a structural assumption.

\begin{figure}[t]
\centering
\includegraphics[width=0.53\linewidth]{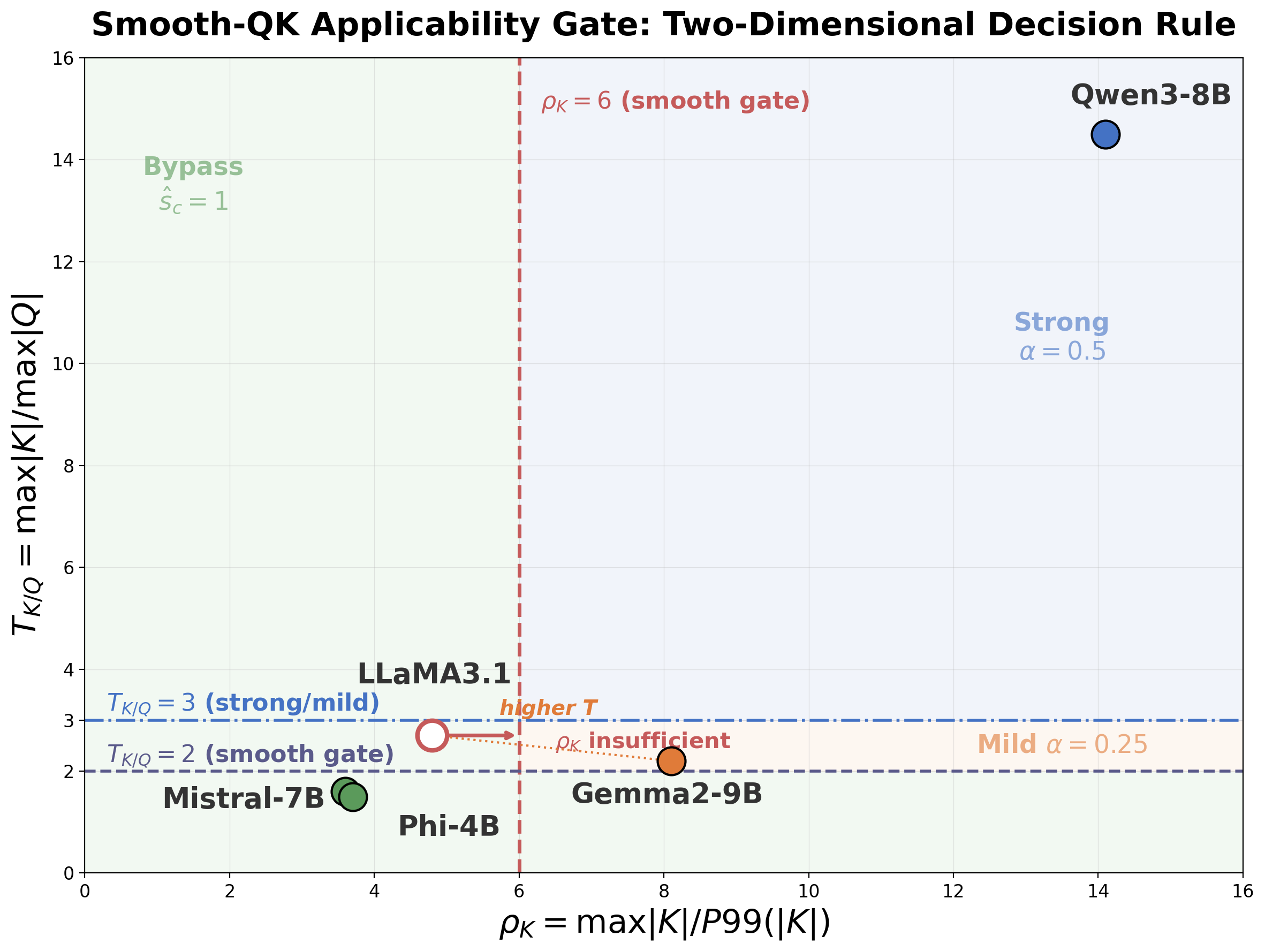}
\caption{The Smooth-QK applicability gate as a two-dimensional decision
plane over $\rho_K$ (intra-K outlier concentration) and $T_{K/Q}$
(K-to-Q dominance), with all five models plotted at their calibration
values.
Smoothing is enabled only in the upper-right region where \emph{both}
$\rho_K\geq6$ and $T_{K/Q}\geq2$ hold; within it, $T_{K/Q}>3$ selects
Strong ($\alpha=0.5$, Qwen3-8B) and $2\leq T_{K/Q}\leq3$ selects Mild
($\alpha=0.25$, Gemma2-9B). All other models are bypassed
($\hat{s}=1$).
LLaMA3.1-8B (open marker) is the informative boundary case: its
$T_{K/Q}=2.70$ exceeds Gemma2-9B's $2.17$, yet its $\rho_K=4.78$ falls
short of the $\rho_K\geq6$ threshold, so the gate correctly bypasses
it---illustrating that $T_{K/Q}$ alone is insufficient and both
coordinates are required.
Jaccard stability (Appendix~\ref{app:jaccard}) validates static
calibration but is not a gate trigger: Mistral-7B has Jaccard\,$=0.887$
yet is bypassed because $\rho_K=3.56<6$ and $T_{K/Q}=1.63<2$.}
\label{fig:adaptive}
\end{figure}

\subsection{Calibration-Static Deployment}
\label{sec:static}

Since the outlier originates from fixed model parameters, $\hat{s}_c$
is stable across inputs (inter-sample CV $<8\%$ for high-outlier channels).
Calibration uses 1024 samples from training/auxiliary splits only.
The per-channel statistics $|K_{\max,c}|$ and $|Q_{\max,c}|$ in
Eq.~\eqref{eq:sqk} are RMS-aggregated over the calibration set
(Section~\ref{sec:calib}); this aggregate is the statistic used to
compute every $\hat{s}_c$, and it is distinct from the per-slice max-abs
values used purely for visualization in Figure~\ref{fig:smooth_effect}.
At inference, applying Smooth-QK requires one element-wise multiply and
divide per channel per token per layer---no per-tile online reduction.

\subsection{Adaptive Applicability Gate}
\label{sec:threshold}

Smooth-QK is beneficial only when two conditions hold simultaneously.
First, K must contain sufficiently \emph{concentrated} internal outliers
(high $\rho_K$): if K is internally uniform, its values are already well
represented by the HIF4 group scale and there is nothing to suppress.
Second, K must \emph{dominate} Q (high $T_{K/Q}$): Smooth-QK transfers
quantization difficulty from K to Q, so when $T_{K/Q}$ is low (K and Q
have similar magnitudes), the transfer merely expands Q toward HIF4
saturation without a proportionate benefit on K.
The gate (Figure~\ref{fig:adaptive}) enforces both conditions.

We treat the gate as an \emph{offline diagnostic}: the per-channel
statistics $\rho_K$ and $T_{K/Q}$ are computed once from the calibration
set, and the resulting branch (Strong / Mild / Bypass) is fixed for
deployment. We do not perform any runtime model selection.
Concretely, smoothing is enabled only when \emph{both} conditions hold,
$\rho_K\geq6$ and $T_{K/Q}\geq2$; within that region $T_{K/Q}>3$ selects
strong smoothing ($\alpha=0.5$) and $2\leq T_{K/Q}\leq3$ selects mild
smoothing ($\alpha=0.25$, which transfers less difficulty to Q). Any
model failing either condition is bypassed ($\hat{s}=1$).
Both conditions are necessary and neither alone suffices: $T_{K/Q}$
captures whether transferring difficulty from K to Q has a worthwhile
target, while $\rho_K$ captures whether K's outliers are concentrated
enough to be worth transferring. LLaMA3.1-8B illustrates why $T_{K/Q}$
alone is insufficient---its $T_{K/Q}=2.70$ actually exceeds Gemma2-9B's
$2.17$, yet its low $\rho_K=4.78$ (K-outliers too diffuse) correctly
routes it to bypass.
On our five models this routes Qwen3-8B to Strong, Gemma2-9B to Mild,
and LLaMA3.1-8B, Mistral-7B, and Phi-4B to Bypass. As we show in
Section~\ref{sec:crossmodel} and Appendix~\ref{app:smooth_dist}, each of
these routing decisions coincides with the empirically best configuration
for that model: smoothing helps Qwen3-8B and Gemma2-9B, and is neutral or
harmful on the three bypass models. The thresholds are empirical
boundaries fitted to these five models; we discuss their limited
statistical support in Section~\ref{sec:limitations}.

\subsection{P-Reordering}
\label{sec:preorder}

In our Direct HIF4 baseline, following the common emulation path used by
several FP4 attention implementations, the softmax normalizer is
accumulated from a higher-precision reconstruction $\tilde{P}$ of the
quantized weights, while the $PV$ GEMM consumes the quantized
$\hat{P}$ directly.
This inconsistency introduces a systematic error that Theorem~\ref{thm:main}
characterizes exactly.
P-Reordering replaces $\tilde{P}$ in the normalizer with $\hat{P}$,
making both paths consistent.

\begin{theorem}
\label{thm:main}
Let $a_j=\exp(S_j-m)$ be the unnormalized softmax weight at position
$j$, $Z=\sum_k a_k$ the partition sum, and $w_j=a_j/Z$ the normalized
weight.
Let $\hat{a}_j=a_j(1+\varepsilon_j)$ be the HIF4-quantized value.
Define $\bar\varepsilon=\sum_j w_j\varepsilon_j$,
$\delta_j=\varepsilon_j-\bar\varepsilon$ ($\sum_j w_j\delta_j=0$),
$O^*=\sum_j w_jv_j$ for fixed $v_j$, and
$e_t=\sum_j w_j\delta_j(v_j-O^*)$.  Then:
\begin{align}
O_{\mathrm{con}}-O^* &= \frac{e_t}{1+\bar\varepsilon}
  &&\!\text{(P-Reordering: denominator\,$=\textstyle\sum_k\hat{a}_k$)}\\
O_{\mathrm{mis}}-O^* &= e_t + \bar\varepsilon\,O^*
  &&\!\text{(mismatched: denominator\,$=Z$)}
\end{align}
\emph{Scope}: $V$ is held fixed; Q/K/V quantization errors are
independent and not modelled here.
\end{theorem}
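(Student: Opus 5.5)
The plan is a direct algebraic computation once the two estimators are written out explicitly. First I fix the definitions implicit in the statement. Both paths feed the quantized weights $\hat a_j$ into the $PV$ GEMM, so the unnormalized output is $N=\sum_j \hat a_j v_j$ in both cases. P-Reordering then divides by $\sum_k \hat a_k$, giving $O_{\mathrm{con}}=N/\sum_k\hat a_k$. The mismatched baseline divides by the higher-precision partition sum, which the theorem idealizes as exactly $Z$, giving $O_{\mathrm{mis}}=N/Z$.

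I also need to dispose of FlashAttention tiling. Each online rescaling $O_i\leftarrow O_i\exp(m_i-m_i^{\mathrm{new}})$ multiplies the numerator and the normalizer by the same factor. Treating these rescalings as exact, as the Scope clause allows, the final $O_i/\ell_i$ equals the single-pass ratio with all $a_j$ referenced to the final maximum $m$. The multi-tile case therefore reduces to the one-shot formulas, with $j$ ranging over all keys.

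The core step is to express $N$ and $\sum_k\hat a_k$ in terms of $Z$, $\bar\varepsilon$ and $e_t$. Writing $a_j=Zw_j$ gives $\sum_k\hat a_k = Z\sum_k w_k(1+\varepsilon_k)=Z(1+\bar\varepsilon)$, using $\sum_k w_k=1$. For the numerator, $N = Z\bigl(O^*+\sum_j w_j\varepsilon_j v_j\bigr)$. Substituting $\varepsilon_j=\bar\varepsilon+\delta_j$ gives
\[
\sum_j w_j\varepsilon_j v_j=\bar\varepsilon\,O^*+\sum_j w_j\delta_j v_j=\bar\varepsilon\,O^*+e_t .
\]
The last equality holds because $\sum_j w_j\delta_j(v_j-O^*)=\sum_j w_j\delta_j v_j - O^*\sum_j w_j\delta_j$ and $\sum_j w_j\delta_j=0$. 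Hence $N=Z\bigl((1+\bar\varepsilon)O^*+e_t\bigr)$.

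Dividing $N$ by $Z$ yields $O_{\mathrm{mis}}-O^*=e_t+\bar\varepsilon O^*$. Dividing $N$ by $Z(1+\bar\varepsilon)$ yields $O_{\mathrm{con}}=O^*+e_t/(1+\bar\varepsilon)$. These are the two claimed identities.

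The only nontrivial points are well-posedness and modelling, not the algebra. I must check that $1+\bar\varepsilon\neq 0$. This follows because the quantized weights $\hat a_j=a_j(1+\varepsilon_j)$ are nonnegative and not all zero: the row maximum gives $a_j=1$, which HIF4 represents essentially exactly. Hence $\sum_k\hat a_k>0$, and so $1+\bar\varepsilon=\sum_k\hat a_k/Z>0$.

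The main obstacle I expect is justifying the idealization of the mismatched denominator as exactly $Z$. In practice it is $\sum_k\tilde a_k$ for the reconstruction $\tilde P$. I would state explicitly that the theorem assumes $\tilde P=P$, and add a remark: if $\tilde a_k=a_k(1+\eta_k)$, the same computation gives $O_{\mathrm{mis}}=\bigl((1+\bar\varepsilon)O^*+e_t\bigr)/(1+\bar\eta)$. This preserves the coherent scaling term whenever $\bar\eta\neq\bar\varepsilon$.
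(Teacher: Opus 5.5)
Your proposal is correct and follows the paper's proof essentially line for line: you expand $\sum_j\hat a_j v_j = Z\bigl((1+\bar\varepsilon)O^*+e_t\bigr)$ via $\varepsilon_j=\bar\varepsilon+\delta_j$ and $\sum_j w_j\delta_j=0$, note $\sum_k\hat a_k=Z(1+\bar\varepsilon)$, and divide by $Z$ or by $Z(1+\bar\varepsilon)$. Your extras are sound additions that the paper leaves implicit or treats only in prose: the check that $1+\bar\varepsilon>0$ (which only needs the group-max entry not to round to zero), the reduction of online rescaling to a single pass, and the $\bar\eta$ remark for an inexact reconstruction.
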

\begin{proof}
\textit{Mismatched:}
$O_{\mathrm{mis}}=\frac{\sum_j\hat{a}_jv_j}{Z}
=O^*+\sum_j w_j\varepsilon_jv_j$.
For the last sum, write $\varepsilon_j=\bar\varepsilon+\delta_j$:
$\sum_j w_j\varepsilon_jv_j
=\bar\varepsilon\sum_j w_jv_j + \sum_j w_j\delta_j v_j
=\bar\varepsilon O^* + \sum_j w_j\delta_j(v_j-O^*) + O^*\!\underbrace{\sum_j w_j\delta_j}_{=0}
=\bar\varepsilon O^* + e_t$.
Hence $O_{\mathrm{mis}}-O^*=e_t+\bar\varepsilon O^*$.
\textit{Consistent:}
$O_{\mathrm{con}}=\frac{\sum_j\hat{a}_jv_j}{\sum_k\hat{a}_k}
=\frac{Z(1+\bar\varepsilon)O^*+Ze_t}{Z(1+\bar\varepsilon)}
=O^*+\frac{e_t}{1+\bar\varepsilon}$.
\end{proof}

The term $\bar\varepsilon O^*$ scales the attention output coherently
along the $O^*$ direction---a radial error that standard metrics such
as relative $L_2$ distance are insensitive to because it is small
compared to the Q/K/V quantization noise floor.
Theorem~\ref{thm:main} isolates the $P$-quantization contribution under
fixed $V$; it is an exact local statement about the normalizer path, not
a claim that this term dominates the full end-to-end error budget, which
also contains independent Q/K/V noise.
The end-to-end ablation in Section~\ref{sec:ablation} (rows B vs.\ D and
C vs.\ E) measures the realized benefit when all of Q, K, V, and $P$ are
quantized; a controlled $P$-only local-operator study that holds Q, K, V
at BF16 is left to a future version.

\begin{figure}[t]
\centering
\begin{minipage}[t]{0.48\linewidth}
\centering
\includegraphics[width=0.6\linewidth]{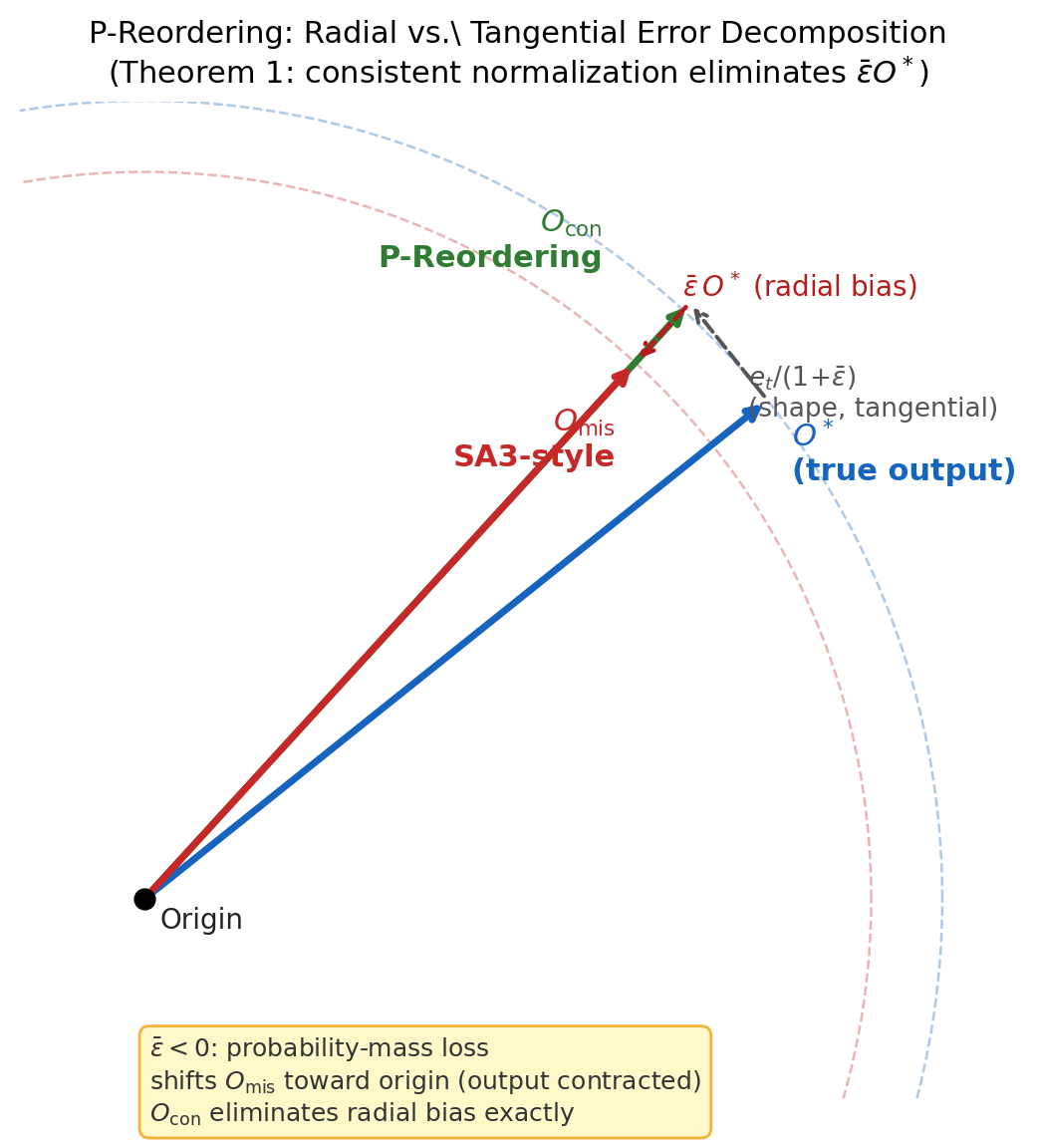}
\caption{Theorem~\ref{thm:main} illustrated.
$O_{\mathrm{mis}}$ (red) is contracted toward the origin by the
$P$-normalizer-induced term $\bar\varepsilon O^*$ ($\bar\varepsilon<0$
under 4-bit).
$O_{\mathrm{con}}$ (P-Reordering, green) does not exhibit this
contraction under fixed $V$.}
\label{fig:preorder_schematic}
\end{minipage}
\hfill
\begin{minipage}[t]{0.48\linewidth}
\centering
\includegraphics[width=0.95\linewidth]{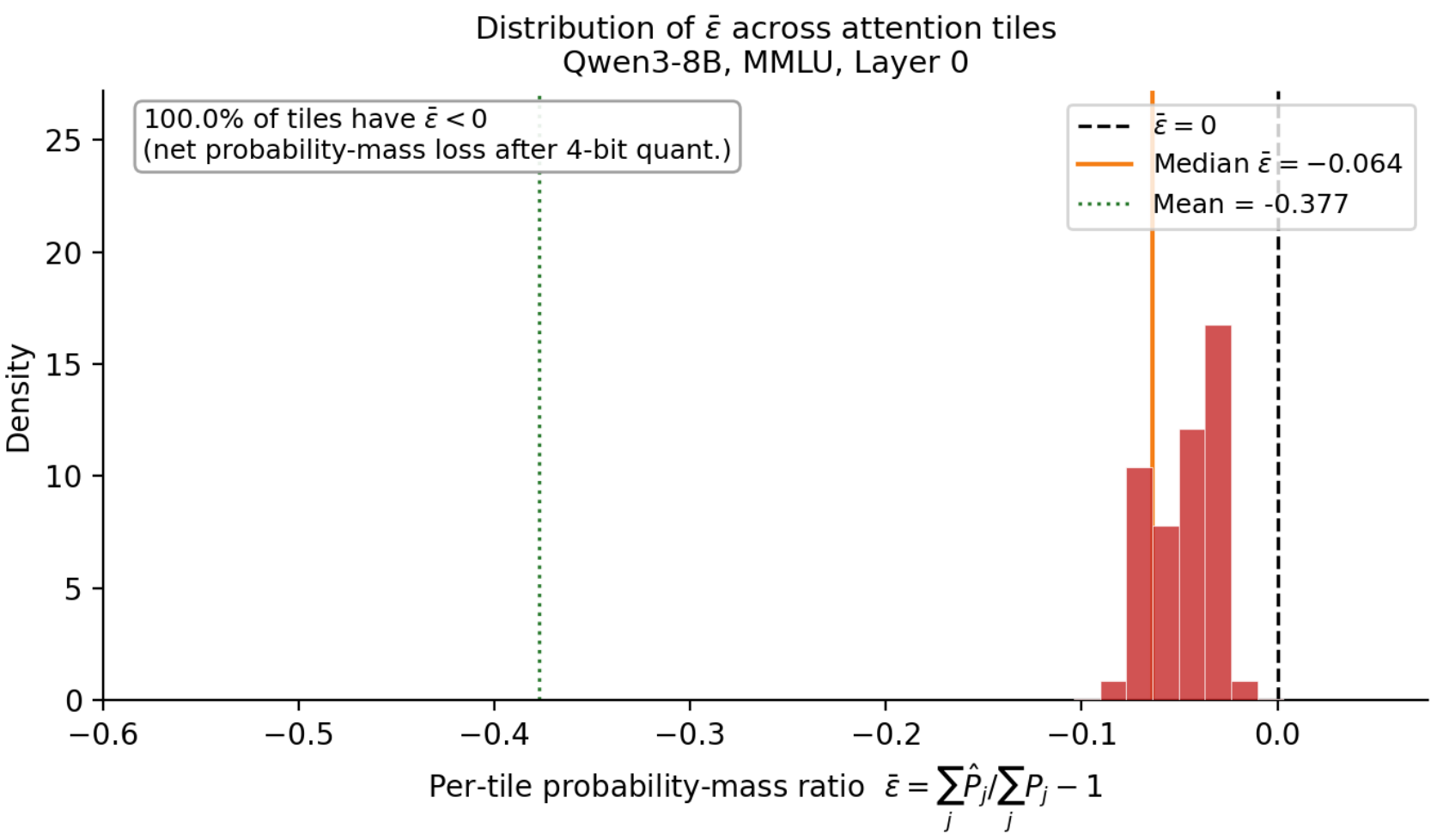}
\caption{Distribution of per-tile $\bar\varepsilon=\sum_j\hat{a}_j/Z-1$
across 3.6~million attention tiles from Qwen3-8B running MMLU at Layer~0
(direct HIF4 quantization, no Smooth-QK).
100\% of tiles have $\bar\varepsilon<0$ (median $=-0.064$),
confirming the consistent negative sign of the bias identified in
Theorem~\ref{thm:main} in this measured Layer-0 Qwen3-8B trace.
The \emph{end-to-end} magnitude contribution across all layers is a
separate question, addressed by the D-vs-B and E-vs-C ablations
(Section~\ref{sec:ablation}) rather than by this per-tile distribution.}
\label{fig:masshist}
\end{minipage}
\end{figure}

\paragraph{Empirical validation.}
Figure~\ref{fig:masshist} shows the distribution of $\bar\varepsilon$
across 3.6~million tiles from Qwen3-8B Layer~0.
Every tile exhibits net probability-mass loss ($\bar\varepsilon<0$).
The radial error term $\bar\varepsilon O^*$ has a consistent
\emph{sign} in this trace; because the direction of $O^*$ varies across
layers, heads, and tokens, the global cross-layer effect is assessed
empirically via the D-vs-B and E-vs-C ablations
(Section~\ref{sec:ablation}) rather than extrapolated from the theorem.

\paragraph{Secondary benefit.}
Accumulating $\sum_j\hat{a}_j$ alongside $\sum_j\hat{a}_jv_j$ is
achievable within the same Cube GEMM by appending a column of ones
to $V$: $\hat{P}[\hat{V}\mid\mathbf{1}]=[\hat{P}\hat{V}\mid\hat{P}\mathbf{1}]$.
This fuses the normalizer into the $PV$ Cube GEMM, removing a separate
vector-path reduction.

\subsection{Auxiliary Q-Mean Compensation}
\label{sec:qmean}
We include the Q-Mean compensation from~\citet{zhang2025sage2} as an
acknowledged auxiliary; it addresses Q DC-bias and is not a
contribution of this paper.

\subsection{Complete Algorithm}
\label{sec:algorithm}
\begin{algorithm}[H]
\small
\caption{C4V16-Aux (HiFA4)}\label{alg:main}
\begin{algorithmic}[1]
\Require Q,K,V (FP16, post-RoPE); static $\hat{s}\in\mathbb{R}^{d_k}$
\State $\tilde{Q}\leftarrow Q\,\mathrm{diag}(\hat{s})$;\;
       $\tilde{K}\leftarrow K\,\mathrm{diag}(\hat{s})^{-1}$
       \Comment{Smooth-QK}
\State $\mu_i\leftarrow\frac{1}{d}\sum_{c=1}^{d}\tilde{Q}_{i,c}$;\;
       $Q_{z,i,c}\leftarrow\tilde{Q}_{i,c}-\mu_i$
       \Comment{Q-Mean: per-token scalar mean over head-feature dim $d$}
\State $\hat{Q},\hat{K},\hat{V}\leftarrow\mathrm{HIF4}(Q_z,\tilde{K},V)$
\For{each tile $(i,j)$}
  \State $S_{ij}\leftarrow\hat{Q}_i\hat{K}_j^T/\!\sqrt{d}$
    \Comment{HIF4 Cube}
  \State $S_{ij}\mathrel{+}=\mu_i\sum_{c}\tilde{K}_{j,c}^T$
    \Comment{FP16 GEMV (Q-Mean correction)}
  \State $m_i^{\mathrm{new}}\leftarrow\max(m_i,\mathrm{rowmax}(S_{ij}))$
    \Comment{online softmax: update running max}
  \State $r\leftarrow\exp(m_i-m_i^{\mathrm{new}})$;\;
         $O_i\leftarrow r\cdot O_i$;\;
         $\ell_i\leftarrow r\cdot\ell_i$;\;
         $m_i\leftarrow m_i^{\mathrm{new}}$
    \Comment{rescale history}
  \State $P_{ij}\leftarrow\exp(S_{ij}-m_i)$
  \State $\hat{P}_{ij}\leftarrow\mathrm{HIF4}(P_{ij})$
    \Comment{P-Reordering: quantize \emph{before} normalizer accumulation}
  \State $[O_i\mid\ell_i]\mathrel{+}=\hat{P}_{ij}[\hat{V}_j\mid\mathbf{1}]$
    \Comment{HIF4 Cube, fused normalizer}
\EndFor
\State $O_i\leftarrow O_i/\ell_i$
\end{algorithmic}
\end{algorithm}

\noindent\textbf{P-Reordering under online softmax.}
Theorem~\ref{thm:main} holds for any fixed softmax offset $m$; in
Algorithm~\ref{alg:main}, $m_i$ is the running maximum that has already
been updated (line~7) before $\hat{P}_{ij}$ is computed (line~9).
The rescaling of $\ell_i$ in line~8 applies the same factor to the
accumulated normalizer as to the accumulated output $O_i$, so the
P-Reordering consistency property---that numerator and denominator both
use the same quantized $\hat{P}$---is preserved across all tiles
independently of the per-tile rescaling.

\section{Experiments}
\label{sec:exp}

Throughout all experiments, the \textbf{BF16 baseline} is the
unmodified FlashAttention execution with BF16 $QK^T$ and $PV$ GEMMs
and FP16 online softmax state (labeled C16V16 in
Section~\ref{sec:latency}).
In all quantized configurations, the surrounding model weights,
prompts, scoring protocol, and non-attention computation remain
unchanged; only the attention operator is replaced by the corresponding
Direct HIF4 or HiFA4 variant defined in Section~\ref{sec:method}.

We evaluate on Qwen3-8B~\citep{qwen3_2025} (primary),
Gemma2-9B~\citep{gemma2_2024},
LLaMA3.1-8B~\citep{llama3_2024},
Mistral-7B~\citep{jiang2023mistral},
and Phi-4B~\citep{abdin2025phi4mini}.
All results use the official HIF4 QDQ implementation.
Calibration uses only training or auxiliary splits.
Full experimental settings (shot count, prompt templates, scoring
protocol, software versions, calibration provenance, and hook coverage)
are documented in Appendix~\ref{app:setup}.

We report \emph{prediction flip rate} (PFR: fraction of samples where
quantized top-1 differs from BF16) and \emph{Quant Hurts} (BF16-correct
samples that the quantized model answers incorrectly; equivalently, the
per-sample accuracy regressions referred to in the abstract), alongside
accuracy.
These decision-level metrics expose degradation invisible in aggregate accuracy.

\subsection{Calibration Aggregation (Qwen3-8B)}
\label{sec:calib}

\begin{table}[htbp]
\centering\small
\caption{Calibration aggregation strategies on Qwen3-8B.
$\delta_w$: sample-weighted accuracy loss relative to BF16 over the four
decision benchmarks (ARC, TQA, MMLU, HellaSwag), lower is better.
The $\delta_w$ values for BF16 and RMS are computed from the per-column
figures in this table.
For the Direct HIF4 row, $\delta_w$ is marked $(\dagger)$ to indicate
it is reported from the evaluation configuration defined in
Section~\ref{sec:crossmodel} (MMLU $14{,}042$, HellaSwag $10{,}042$,
ARC $295$, TQA $817$), which uses the complete evaluation sets rather
than the mixed-task calibration set shown here; the per-column figures
in this row are from the same configuration but include GPQA and BBH
for comparison.
The $\delta_w$ for Mean and 99th percentile are omitted ($-$) because
these strategies are evaluated only for the purposes of selecting the
calibration aggregator and their $\delta_w$ is not used downstream.
Strategies are ranked primarily by PPL and per-benchmark accuracy;
RMS is selected as it gives the best PPL (9.72) and the smallest
$\delta_w$ (0.70\,pp).}
\label{tab:calib}
\setlength{\tabcolsep}{3pt}
\begin{tabular}{lrrrrrrrr}
\toprule
Strategy & ARC & TQA & GPQA & BBH & Hella & MMLU & $\delta_w$ & PPL\\
\midrule
BF16            &.885&.704&.364&.602&.865&.714& 0   & 9.73\\
Direct HIF4     &.881&.705&.369&.570&.841&.680& $1.12^{\dagger}$&11.35\\
\midrule
Mean            &.881&.716&.364&.596&.849&.699& $-$& 9.83\\
RMS (used)      &.885&.725&.369&.591&.857&.706&\textbf{0.70}&\textbf{9.72}\\
99th percentile &.888&.710&.343&.598&.859&.711& $-$& 9.78\\
\bottomrule
\end{tabular}
\end{table}

RMS aggregation achieves the best $\delta_w=0.70$\,pp and is used
in all subsequent experiments.
The same RMS aggregate over the 1024-sample calibration set defines the
per-channel $|K_{\max,c}|,|Q_{\max,c}|$ used in Eq.~\eqref{eq:sqk}.

\subsection{Main Decision-Consistency Results (Qwen3-8B)}

\begin{table}[htbp]
\centering\small
\caption{Prediction flip rate and Quant Hurts (Qwen3-8B,
Direct HIF4 vs.\ HiFA4).}
\label{tab:consistency}
\setlength{\tabcolsep}{4pt}
\begin{tabular}{lrrrr}
\toprule
& \multicolumn{2}{c}{Flip Rate} & \multicolumn{2}{c}{Quant Hurts}\\
\cmidrule(lr){2-3}\cmidrule(lr){4-5}
Dataset & Direct & HiFA4 & Direct & HiFA4\\
\midrule
ARC-C (295)      &7.46\%(22) &\textbf{1.69\%(5)} &11&\textbf{4}\\
TruthfulQA (817) &14.93\%(122)&\textbf{6.36\%(52)}&43&\textbf{12}\\
MMLU (14k)       &16.26\%(2283)&\textbf{8.16\%(1146)}&1071&\textbf{465}\\
HellaSwag (10k)  &9.02\%(906) &\textbf{4.75\%(477)}&516&\textbf{233}\\
\bottomrule
\end{tabular}
\end{table}

Table~\ref{tab:consistency} presents the main results on Qwen3-8B.
HiFA4 reduces the prediction flip rate substantially across all four
benchmarks: ARC-Challenge flips drop from 7.46\% to 1.69\%, and MMLU
flips fall from 16.26\% to 8.16\%.
Quant Hurts---samples that BF16 answers correctly but the quantized
model does not---are reduced by 57\% on MMLU (1071~to~465) and 55\%
on HellaSwag (516~to~233).

The sample-weighted accuracy loss relative to BF16 ($\delta_w$,
Table~\ref{tab:calib}) narrows from 1.12\,pp under direct HIF4
quantization to 0.70\,pp under HiFA4---recovering 37.5\% of the
quantization-induced degradation.
Notably, on TruthfulQA, HiFA4 actually \emph{exceeds} BF16 accuracy
(.725 vs.\ .704), which may reflect a mild regularization/noise effect
of K-outlier suppression on attention patterns
(Section~\ref{sec:sqk}) rather than a guaranteed improvement over BF16.

These gains are achieved without any model retraining, fine-tuning, or
modifications to the surrounding inference pipeline.
The only change from a standard BF16 FA execution is the substitution
of the C4V16-Aux operator described in Section~\ref{sec:method}.

\subsection{Decision Margin on Correctly-Answered Samples}

Beyond whether the final prediction is correct or incorrect, the
top-1/top-2 logit \emph{margin} of that prediction is meaningful: a model
that produces correct answers with a thin margin between the top two
logits is more likely to flip under slight distribution shift than one
with a large margin.
We treat this margin as a robustness proxy rather than a calibrated
confidence.
To probe this effect, we examine the top-1 minus top-2 logit gap across
samples that \emph{all three configurations} (BF16, Direct HIF4, HiFA4)
answer correctly, thereby controlling for correctness and isolating the
effect of quantization on the decision margin.

\begin{figure}[htbp]
\centering
\begin{subfigure}[b]{0.48\linewidth}
\includegraphics[width=\linewidth]{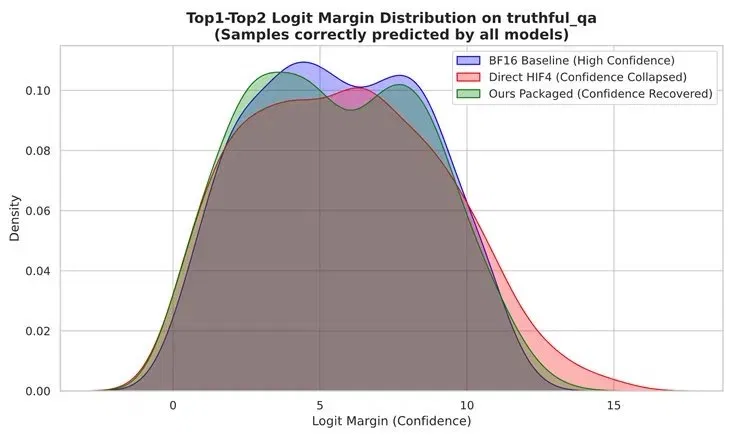}
\caption{TruthfulQA}
\end{subfigure}
\begin{subfigure}[b]{0.48\linewidth}
\includegraphics[width=\linewidth]{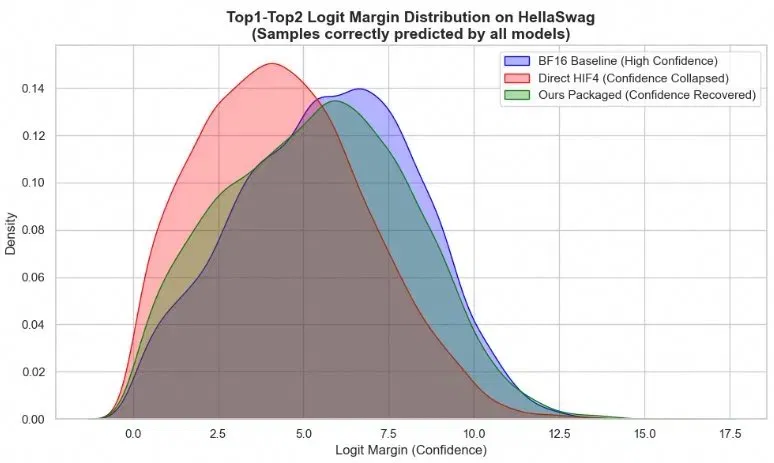}
\caption{HellaSwag}
\end{subfigure}
\caption{Top1--Top2 logit margin distribution, restricted to samples
all three configurations (BF16, Direct HIF4, HiFA4) answer correctly.
The distribution is a proxy for decision-margin robustness; a shift
leftward indicates reduced robustness even on correct predictions.
(In-panel labels use ``confidence'' loosely as a synonym for this
top-1/top-2 margin; we use ``decision margin'' in the text.)}
\label{fig:logit}
\end{figure}

Figure~\ref{fig:logit} shows that direct HIF4 quantization (red)
shifts the logit margin distribution noticeably leftward relative to
BF16 (blue) on both TruthfulQA and HellaSwag.
This shift occurs even for samples that all three models answer
correctly, indicating that quantization narrows the decision margin
in ways that are invisible to accuracy metrics but may affect
robustness under slight perturbation.
HiFA4 (green) recovers most of this margin: on TruthfulQA, the HiFA4
distribution nearly coincides with BF16, while on HellaSwag the
recovery is partial but substantial.
This result is consistent with the flip-rate improvements in
Table~\ref{tab:consistency}: suppressing K-outliers and correcting the
normalizer mismatch not only prevents more prediction changes but also
widens the decision margin of the remaining correct predictions.

\subsection{Long-Context Behavior}
\label{sec:longctx}

\begin{table}[htbp]
\centering\small
\caption{Needle-in-a-haystack results (Qwen3-8B, max 40K tokens).
Retrieval is saturated at 100\% and not discriminative.
The attention-output relative error
$\|O_{\mathrm{quant}}-O_{\mathrm{BF16}}\|_2/\|O_{\mathrm{BF16}}\|_2$
(prefill, averaged over all query positions and all layers) reveals
that Direct HIF4 accumulates error with context length; HiFA4 remains
stable.}
\label{tab:needle}
\begin{tabular}{lrrr}
\toprule
& BF16 & Direct HIF4 & HiFA4\\
\midrule
4K / 16K / 40K retrieval & 100\% & 100\% & 100\%\\
\midrule
4K attn.\ rel.\ $L_2$  & 0\% & 51.5\% & 12.5\%\\
16K attn.\ rel.\ $L_2$ & 0\% & 55.0\% & 12.8\%\\
40K attn.\ rel.\ $L_2$ & 0\% & 63.1\% & $\sim$12\%\\
K cache max-abs  & --- & 14.0   & 0.87--1.75\\
\bottomrule
\end{tabular}
\end{table}

The attention-output relative error reported in Table~\ref{tab:needle}
is the relative $L_2$ distance
$\|O_{\mathrm{quant}}-O_{\mathrm{BF16}}\|_2/\|O_{\mathrm{BF16}}\|_2$
between the quantized and BF16 attention outputs, computed over the
prefill stage and averaged over all query positions and all layers.
Because retrieval accuracy saturates at 100\% and is not discriminative,
this metric is the operative signal: Direct HIF4 error grows from 51.5\%
at 4K to 63.1\% at 40K, while HiFA4 stays near 12\% across context
lengths.
We note two limitations of this experiment that we plan to address in the
future version: (i) it is reported only on Qwen3-8B, and (ii) the
needle task does not probe long-context \emph{reasoning}.
A cross-model evaluation on RULER and a LongBench subset, together with
KV-cache memory and decode-side latency, is deferred to that version.

\subsection{Scheduling Implication of P-Reordering}
\label{sec:latency}

HIF4 quantization reduces Cube-path cost to 25\% of BF16 but shifts
the critical-path bottleneck to the vector path, which now includes
softmax and normalizer reduction.
P-Reordering eliminates this separate normalizer reduction by fusing it
into the $PV$ Cube GEMM (Section~\ref{sec:preorder}), reducing
vector-path pressure.
A preliminary instruction-scheduling analysis projects a 35.4\% critical-path latency reduction relative to BF16;
details and the full per-stage breakdown are in
Appendix~\ref{app:latency_model}.
On-hardware validation is deferred to when the target NPU becomes
publicly available (Section~\ref{sec:limitations}).

\subsection{Cross-Model Results}
\label{sec:crossmodel}

\begin{table}[htbp]
\centering\small
\caption{Accuracy across five models, each evaluated under its
gate-assigned configuration (Qwen3-8B Strong; Gemma2-9B Mild $\alpha=0.25$;
LLaMA3.1-8B, Mistral-7B, Phi-4B Bypass).
MMLU and HellaSwag use the full 14{,}042- and 10{,}042-sample sets for all
five models; ARC and TQA use the full evaluation sets.
$\Delta_{\mathrm{avg}}$: simple mean of signed per-benchmark gaps from
BF16; $\Delta_{w}=\sum_b N_b(\mathrm{Acc}^{\mathrm{HiFA4}}_b-\mathrm{Acc}^{\mathrm{BF16}}_b)/\sum_b N_b$:
sample-weighted signed gap over the four benchmarks with weights $N_b$
equal to their evaluation-set sizes
($N_{\mathrm{MMLU}}\!=\!14{,}042$, $N_{\mathrm{Hella}}\!=\!10{,}042$,
$N_{\mathrm{TQA}}\!=\!817$, $N_{\mathrm{ARC}}\!=\!295$).
Both are signed (negative $=$ below BF16).}
\label{tab:cross}
\setlength{\tabcolsep}{3pt}
\begin{tabular}{llrrrrrr}
\toprule
Model (Gate) & Method & MMLU & ARC & TQA & Hella & $\Delta_{\mathrm{avg}}$ & $\Delta_{w}$\\
\midrule
\multirow{2}{*}{Qwen3-8B (Strong)}
  & BF16   &.714&.885&.704&.865& ---& ---\\
  & HiFA4  &.706&.885&.725&.857& $+0.1$\,pp& $-0.7$\,pp\\
\midrule
\multirow{2}{*}{Gemma2-9B (Mild)}
  & BF16   &.675&.892&.422&.603& ---& ---\\
  & HiFA4  &.672&.885&.453&.588& $+0.2$\,pp& $-0.7$\,pp\\
\midrule
\multirow{2}{*}{LLaMA3.1-8B (Bypass)}
  & BF16   &.610&.803&.438&.547& ---& ---\\
  & HiFA4  &.607&.773&.426&.519& $-1.8$\,pp& $-1.4$\,pp\\
\midrule
\multirow{2}{*}{Mistral-7B (Bypass)}
  & BF16   &.586&.800&.518&.706& ---& ---\\
  & HiFA4  &.588&.790&.523&.701& $-0.2$\,pp& $-0.1$\,pp\\
\midrule
\multirow{2}{*}{Phi-4B (Bypass)}
  & BF16   &.663&.837&.524&.726& ---& ---\\
  & HiFA4  &.656&.841&.503&.720& $-0.8$\,pp& $-0.7$\,pp\\
\bottomrule
\end{tabular}
\end{table}

\begin{table}[htbp]
\centering\small
\caption{Full-set decision metrics for the three bypass models
(Direct HIF4 vs.\ HiFA4) on MMLU (14{,}042) and HellaSwag (10{,}042).
In bypass mode Smooth-QK is disabled ($\hat{s}=1$), so the improvement
over Direct HIF4 isolates the joint effect of P-Reordering and the
Q-Mean auxiliary on models without a concentrated K-outlier.
Acc\,=\,accuracy; Fl\,=\,flips vs.\ BF16; Hu\,=\,Quant Hurts;
He\,=\,Quant Helps; $\Delta$Hu\,=\,Quant-Hurt reduction.
Across all three models HiFA4 reduces Quant Hurts by 39--52\% without
any K-smoothing.}
\label{tab:bypass_full}
\setlength{\tabcolsep}{3.5pt}
\begin{tabular}{lllrrrrr}
\toprule
Model & Dataset & Method & Acc & Fl & Hu & He & $\Delta$Hu\\
\midrule
\multirow{6}{*}{LLaMA3.1-8B}
 & \multirow{3}{*}{MMLU}  & BF16        & .6098 & ---  & ---  & ---  & ---\\
 &                        & Direct HIF4 & .5769 & 2073 & 976  & 514  & ---\\
 &                        & HiFA4       & \textbf{.6070} & \textbf{1279} & \textbf{468} & 429 & $-52.0\%$\\
\cmidrule(lr){2-8}
 & \multirow{3}{*}{HellaSwag} & BF16    & .5466 & ---  & ---  & ---  & ---\\
 &                        & Direct HIF4 & .4883 & 2539 & 1215 & 630  & ---\\
 &                        & HiFA4       & \textbf{.5190} & \textbf{1674} & \textbf{745} & 468 & $-38.7\%$\\
\midrule
\multirow{6}{*}{Mistral-7B}
 & \multirow{3}{*}{MMLU}  & BF16        & .5857 & ---  & ---  & ---  & ---\\
 &                        & Direct HIF4 & .5819 & 1957 & 696  & 642  & ---\\
 &                        & HiFA4       & \textbf{.5875} & \textbf{1282} & \textbf{408} & 433 & $-41.4\%$\\
\cmidrule(lr){2-8}
 & \multirow{3}{*}{HellaSwag} & BF16    & .7062 & ---  & ---  & ---  & ---\\
 &                        & Direct HIF4 & .6808 & 1213 & 614  & 359  & ---\\
 &                        & HiFA4       & \textbf{.7009} & \textbf{821}  & \textbf{355} & 301 & $-42.2\%$\\
\midrule
\multirow{6}{*}{Phi-4B}
 & \multirow{3}{*}{MMLU}  & BF16        & .6629 & ---  & ---  & ---  & ---\\
 &                        & Direct HIF4 & .6355 & 2113 & 943  & 558  & ---\\
 &                        & HiFA4       & \textbf{.6560} & \textbf{1246} & \textbf{493} & 397 & $-47.7\%$\\
\cmidrule(lr){2-8}
 & \multirow{3}{*}{HellaSwag} & BF16    & .7260 & ---  & ---  & ---  & ---\\
 &                        & Direct HIF4 & .7124 & 1374 & 632  & 496  & ---\\
 &                        & HiFA4       & \textbf{.7196} & \textbf{842}  & \textbf{369} & 305 & $-41.6\%$\\
\bottomrule
\end{tabular}
\end{table}

Table~\ref{tab:cross} shows aggregate accuracy under each model's
gate-assigned configuration.
On Qwen3-8B (Strong Smooth-QK), HiFA4 is within 0.7\,pp of BF16
($\Delta_w=-0.7$\,pp), with the simple mean essentially flat
($\Delta_{\mathrm{avg}}=+0.1$\,pp).
On Gemma2-9B (Mild Smooth-QK, $\alpha=0.25$), HiFA4 stays close to BF16
($\Delta_{\mathrm{avg}}=+0.2$\,pp, $\Delta_w=-0.7$\,pp); relative to the
Direct HIF4 baseline it improves decision consistency, reducing full-set
MMLU Quant Hurts from 445 to 323 ($-27\%$) and recovering accuracy from
66.75\% to 67.23\% (BF16 67.49\%), while on TruthfulQA it even exceeds
BF16 (.453 vs.\ .422).
These two models confirm that Smooth-QK is beneficial when the gate
detects a concentrated K-outlier ($\rho_K\geq6$).

For the three bypass models (LLaMA3.1-8B, Mistral-7B, Phi-4B), the gate
disables Smooth-QK, and the remaining components---P-Reordering and the
Q-Mean auxiliary---still substantially reduce decision drift relative to
Direct HIF4.
Table~\ref{tab:bypass_full} reports the full-set MMLU and HellaSwag
metrics: Quant Hurts fall by 39--52\% on every model/benchmark pair
without any K-smoothing (e.g.\ LLaMA3.1-8B MMLU $976\to468$, $-52\%$;
Phi-4B MMLU $943\to493$, $-48\%$; Mistral-7B HellaSwag $614\to355$,
$-42\%$).
This demonstrates that the P-Reordering\,$+$\,Q-Mean benefit is not
specific to models with a K-dominant outlier pattern, and is the
mechanism by which HiFA4 remains useful precisely where Smooth-QK does
not apply.

LLaMA3.1-8B is an instructive boundary case that motivates the
two-dimensional gate.
Its K-to-Q asymmetry is not the issue---$T_{K/Q}=2.70$ actually
\emph{exceeds} that of Gemma2-9B ($2.17$), which is smoothed---but its
intra-K outlier severity $\rho_K=4.78$ falls below the $\rho_K\geq6$
threshold, indicating K-outliers too diffuse to benefit from difficulty
transfer.
Consistent with this, applying Smooth-QK at any tested strength
($\alpha\in[0.05,0.35]$) fails to improve---and on the diagnostic
benchmarks slightly degrades---its decision consistency relative to the
bypass configuration (Appendix~\ref{app:smooth_dist}).
The gate therefore routes it to bypass, where P-Reordering and Q-Mean
alone deliver the largest MMLU Quant-Hurt reduction of any model
($-52\%$).
Had the gate used $T_{K/Q}$ alone, LLaMA3.1-8B would have been
incorrectly smoothed; this case shows that the decision is driven by the
\emph{structure} of the outlier ($\rho_K$), not merely the presence of
K--Q asymmetry.

\section{Ablation}
\label{sec:ablation}

\begin{table*}[htbp]
\centering\small
\caption{Stepwise ablation on Qwen3-8B.
Row~A is Direct HIF4 without any corrections.
Rows~B--E include the Q-Mean auxiliary from~\citet{zhang2025sage2}.
MMLU and HellaSwag use full evaluation sets.
ARC and TruthfulQA values in row~E match Table~\ref{tab:consistency}.
Bold\,=\,best per column.}
\label{tab:ablation}
\begin{tabular}{lcc rr rr rr rr}
\toprule
Config & SQK & PRe
  & \multicolumn{2}{c}{MMLU (14k)}
  & \multicolumn{2}{c}{HellaSwag (10k)}
  & \multicolumn{2}{c}{ARC (full)}
  & \multicolumn{2}{c}{TruthfulQA (full)}\\
\cmidrule(lr){4-5}\cmidrule(lr){6-7}\cmidrule(lr){8-9}\cmidrule(lr){10-11}
&&& Flip & Hurt & Flip & Hurt & Flip & Hurt & Flip & Hurt\\
\midrule
A.\ Direct HIF4    &---&---& 2283&1071& 906&516& 22&11&122&43\\
B.\ +Q-Mean        &---&---& 1595& 693& 601&319& 13& 6& 89&29\\
C.\ +Smooth-QK     &\checkmark&---& 1279& 536& 541&301& 11& 5& \textbf{50}&\textbf{11}\\
D.\ +P-Reordering  &---&\checkmark& 1476& 601& 553&\textbf{271}& 13& 7& 87&29\\
E.\ Both (HiFA4)   &\checkmark&\checkmark& \textbf{1146}&\textbf{465}&\textbf{477}&\textbf{233}&\textbf{5}&\textbf{4}& 52&\textbf{12}\\
\bottomrule
\end{tabular}
\end{table*}

\textbf{Smooth-QK (C vs.\ B):}
MMLU flips $-20\%$, hurts $-23\%$.
Strongest on TruthfulQA (flips $89\to50$, hurts $29\to11$), consistent
with K-outlier suppression having the largest effect on semantically
sensitive decisions.

\textbf{P-Reordering (D vs.\ B):}
Achieves the lowest HellaSwag Quant Hurts of any single mechanism
($271$ vs.\ $301$ for Smooth-QK alone), consistent with the normalizer
error accumulating proportionally with the number of active sequence
positions.
MMLU hurts $-13\%$.
We note that this comparison isolates P-Reordering relative to the
Q-Mean-only baseline (row B), not relative to an unquantized-V control;
the latter is the subject of the planned $P$-only local-operator study
discussed in Section~\ref{sec:preorder}.

\textbf{Full HiFA4 (E):}
Outperforms either mechanism alone on every column.
ARC flips drop to 5, below C ($=11$) and D ($=13$), confirming
the two mechanisms address complementary failure modes.

\section{Discussion}

\paragraph{Why the coherent error is hard to detect.}
Standard distance metrics such as relative $L_2$ are dominated by
the independent per-dimension noise from Q, K, V quantization.
The coherent term $\bar\varepsilon O^*$ changes relative $L_2$ only
weakly, yet has a consistent sign ($\bar\varepsilon<0$ in the measured
trace) and its aggregate effect across layers shifts logit ordering.
PFR and Quant Hurts are sensitive to this accumulated effect.

\paragraph{Bypass models still benefit.}
On the three bypass models (LLaMA3.1-8B, Mistral-7B, Phi-4B), Smooth-QK
is disabled, yet HiFA4 reduces full-set Quant Hurts by 39--52\%
(Table~\ref{tab:bypass_full}).
This shows that P-Reordering and Q-Mean together address failure
modes that are not specific to the concentrated-K-outlier pattern, and
that the gate's decision to bypass these models loses no decision-level
accuracy.

\paragraph{Limitations.}
\label{sec:limitations}
The most significant limitation is that all latency numbers are
theoretical instruction-scheduling estimates; the target Ascend NPU is
not yet publicly available, and on-hardware throughput and latency
measurements will be reported when it is.

The applicability thresholds ($\rho_K\geq6$ and $T_{K/Q}\geq2$ to enable
smoothing; $T_{K/Q}>3$ for Strong vs.\ Mild) are empirical boundaries
fitted to five models.
With only five points, these thresholds have limited statistical support:
the $\rho_K\geq6$ boundary falls in the gap between LLaMA3.1-8B
($\rho_K=4.78$, bypassed) and Gemma2-9B ($\rho_K=8.11$, smoothed), and a
larger model suite would be needed to locate it precisely or to confirm
its functional form. We therefore present the gate as a calibrated
heuristic that is consistent with our per-model ablations
(Section~\ref{sec:crossmodel}, Appendix~\ref{app:smooth_dist}), not as a
universally validated criterion.

Smooth-QK itself is effective only on models whose K activations carry a
concentrated, parameter-induced outlier; among our five models this holds
for Qwen3-8B (the QK-RMSNorm channel-51 pattern) and, more mildly, for
Gemma2-9B. On models without such structure (LLaMA3.1-8B, Mistral-7B,
Phi-4B) smoothing is correctly bypassed, and HiFA4's benefit comes
entirely from P-Reordering and Q-Mean. Characterizing how prevalent the
concentrated-K-outlier pattern is across the broader model population is
left to future work; we do not claim Smooth-QK is broadly applicable
beyond models exhibiting it.

The end-to-end isolation of P-Reordering relies on the row B vs.\ D
ablation, in which Q-Mean is already active and V is quantized; a
cleaner $P$-only local-operator ablation (Q, K, V held at BF16, varying
only the normalizer denominator, reported via cosine similarity and
output-norm ratio) is deferred to a future version, as is a
cross-layer and cross-model characterization of $\bar\varepsilon$.

The long-context evaluation (Section~\ref{sec:longctx}) currently uses a
retrieval task that saturates at 100\% accuracy and is reported on a
single model; RULER/LongBench coverage, KV-cache memory, and decode-side
latency are deferred.

\section{Conclusion}

HiFA4 quantizes both $QK^T$ and $PV$ in FlashAttention to HIF4 4-bit
for LLM inference on Ascend NPUs via two mechanisms.
Smooth-QK traces K-activation outliers to fixed model parameters and
eliminates them with a calibration-static per-channel rescaling
requiring no per-tile online reduction and only a fusible element-wise
scaling.
P-Reordering proves and eliminates a coherent output error arising
from normalizer--GEMM inconsistency (Theorem~\ref{thm:main}),
validated across 3.6~million attention tiles, and additionally
fuses the normalizer into the $PV$ Cube GEMM.
On Qwen3-8B: MMLU Quant Hurts $1071\to465$, ARC flips $22\to5$,
sample-weighted $\delta_w$ narrowed from 1.12\,pp to 0.70\,pp
(Table~\ref{tab:calib}).
On Gemma2-9B (mild smoothing): MMLU Quant Hurts $445\to323$, within
0.7\,pp of BF16.
On the three bypass models, P-Reordering and Q-Mean alone reduce full-set
MMLU Quant Hurts by 41--52\% (LLaMA3.1-8B $976\to468$; Phi-4B
$943\to493$; Mistral-7B $696\to408$),
demonstrating that this component generalizes beyond models
with concentrated K-outliers.
A theoretical latency model projects a 35.4\% reduction relative to BF16,
to be validated on hardware when the target NPU becomes available.

\bibliographystyle{plainnat}
\bibliography{refs}

\appendix

\section{Caveats on the FP4 Attention Port}
\label{app:sa3caveat}

The motivation experiment discussed in the introduction applies the published FP4
attention recipe of~\citet{zhang2025sage3} to Qwen3-8B.
This is an algorithmic port, not a controlled ablation or a comparison
of optimized kernels.
At least four factors differ simultaneously between this port and HiFA4:

\textbf{(1)~Hardware and quantization format.}
SageAttention3 targets NVIDIA Blackwell's NVFP4, a 16-element-group
block-floating-point format with an FP8-E4M3 scale.
HiFA4 targets Ascend HIF4's three-level hierarchical format with a
64-element group, a global E6M2 base scale, two levels of 1-bit
micro-exponents, and S1P2 elements.
The two formats respond differently to the same activation distributions.

\textbf{(2)~Smoothing calibration.}
SageAttention3's smoothing parameters are tuned for diffusion model
activations (CogVideoX, HunyuanVideo, etc.).
The port applies those parameters to Qwen3-8B, whose activations
have a qualitatively different outlier structure (QK-RMSNorm-induced,
parameter-level, head-invariant) not present in diffusion models.

\textbf{(3)~$P$-quantization granularity.}
SageAttention3 quantizes attention weights at a different granularity
than HIF4's 64-element grouping.

\textbf{(4)~Normalizer path.}
SageAttention3 accumulates the softmax normalizer from a
higher-precision reconstruction of $\hat{P}$ (see Section~\ref{sec:preorder}).
This is the mismatch that P-Reordering is designed to correct.

The degradation observed in this port reflects all four factors acting
simultaneously.
It demonstrates that the published recipe does not transfer directly
to a language model on different hardware, not that
the authors' optimized kernel fails on its intended workloads.
We present it as a \emph{portability stress test}, not a comparison.

\section{Experimental Settings}
\label{app:setup}

This appendix documents the full evaluation configuration to support
reproducibility.

\paragraph{Scoring protocol.}
All benchmarks are evaluated as multiple-choice tasks scored by
comparing the per-option likelihoods of the candidate continuations; we
do not use free-form generation.
The predicted label is the argmax over option likelihoods, and the
top-1/top-2 logit margin in Figure~\ref{fig:logit} is computed on these
option logits.
ARC-Challenge, TruthfulQA (MC1), GPQA-Diamond, MMLU, and HellaSwag all
follow this protocol.

\paragraph{Shot count and prompt template.}
Evaluation is zero-shot with the standard task templates; the same
template string is used for BF16, Direct HIF4, and HiFA4 so that the
only varying factor is the numerical precision of attention.
Answer extraction is by option-likelihood ranking as above and does not
depend on string parsing.

\paragraph{Precision and dtype.}
The \textbf{BF16 baseline} (labeled \textbf{C16V16} in
Table~\ref{tab:latency}) is the unmodified FlashAttention execution in
which both the $QK^T$ and $PV$ GEMMs use BF16 operands with FP16 online
softmax state---the exact floating-point operator baseline against which
all quantized variants are compared.
For the quantized configurations, only the attention $QK^T$ and $PV$
GEMMs are computed in HIF4 (with the online softmax state, normalizer,
and final normalization in FP16 as detailed in Appendix~\ref{app:impl});
all other model weights and activations remain in BF16.
This isolates the effect of attention quantization.

\paragraph{Calibration provenance and leakage.}
Smooth-QK calibration uses 1024 samples drawn exclusively from training
or auxiliary splits, never from any evaluation split, so there is no
calibration/test leakage.
Scale factors are computed once offline and held fixed for all
evaluation runs.

\paragraph{Hook coverage.}
All attention computation is routed through the C4V16 dispatch hook; no
attention call falls back to a BF16 path.
We log the number of hook invocations per evaluation to verify complete
coverage.
For example, on Phi-4B the hook records 321{,}344 calls on the full
HellaSwag set and 449{,}344 calls on the full MMLU set, matching the
expected (layers~$\times$ samples~$\times$ tiles) counts for those runs.
Equivalent coverage logs are produced for every model/benchmark pair.

\paragraph{Software and hardware.}
Functional QDQ evaluation uses the official HIF4 reference
implementation following the BF16$\to$HIF4 conversion of Algorithm~1
in the HIF4 specification~\citep{huawei2026hif4} (64-element group,
three-level scaling hierarchy).
We will release the exact library, framework, and toolkit versions
(PyTorch / transformers / Ascend toolkit) alongside the code; on-hardware
timing awaits public availability of the target NPU.

\paragraph{Determinism.}
Evaluation runs are deterministic given a fixed model, fixed calibration
scales, and greedy (argmax) option scoring; no sampling is used, so the
reported decision-consistency counts are reproducible across runs.

\section{Implementation Details}
\label{app:impl}

\paragraph{Hook insertion point.}
All quantization is applied inside the FlashAttention dispatch hook,
after RoPE but before any GEMM.
Neither the projection layers nor the RoPE rotation itself is modified.

\paragraph{Smooth-QK placement.}
Smooth-QK must be applied after RoPE so that the rotation sees the
original per-head Q and K tensors.
Applying it before RoPE would require commutativity with the complex
exponentials in the rotation, which does not hold in general.

\paragraph{Quantized tensors.}
The following tensors are quantized to HIF4 before entering a Cube GEMM:
\begin{itemize}
\item $Q_z = \tilde{Q} - \mu$ (zero-mean query, post-Smooth-QK);
\item $\tilde{K} = K\,\mathrm{diag}(\hat{s})^{-1}$ (smoothed key);
\item $V$ (value, unchanged);
\item $\hat{P}_{ij}$ (per-tile softmax weights, before $PV$ GEMM).
\end{itemize}

\paragraph{FP16 operations.}
The following are computed and maintained in FP16 throughout:
\begin{itemize}
\item Running maximum $m_i$ (online softmax);
\item Softmax exponential $\exp(S_{ij}-m_i)$;
\item Normalizer $\ell_i$ (accumulated from $\hat{P}$, result in FP16);
\item Final normalization $O_i \leftarrow O_i/\ell_i$;
\item Q-Mean auxiliary GEMV (see below).
\end{itemize}

\paragraph{Q-Mean GEMV and coordinate consistency.}
The Q-Mean correction subtracts the per-token scalar mean of $\tilde{Q}$
over the head-feature (channel) dimension $d$:
$\mu_i = \frac{1}{d}\sum_{c=1}^{d}\tilde{Q}_{i,c}$,
which is a scalar per query token (not a mean across heads).
The correction compensates by adding $\mu_i\sum_{c}\tilde{K}_{j,c}$
to each tile's pre-softmax scores, i.e.,
$S_{ij}\mathrel{+}=\mu_i\mathbf{1}_{T_j}^T\tilde{K}_j$
where $\mathbf{1}_{T_j}$ sums all key-feature channels.
We use $\tilde{K}$ (post-Smooth-QK FP16 tensor) rather than the original
$K$ to maintain a consistent coordinate frame: since $\tilde{Q}$ is the
quantity fed into the main QK GEMM, the Q-mean should be referenced to
the same transformed space, and the correction should use the
correspondingly transformed key.
The $\tilde{K}$ tensor is available as a transient FP16 activation before
HIF4 packing and requires no additional memory allocation.

\paragraph{GQA and batch handling.}
In GQA configurations (e.g.\ Qwen3-8B uses 32 Q heads and 8 KV heads),
the scale factor $\hat{s}$ is computed per key-head channel and broadcast
across the corresponding Q heads.
This correctly handles the head-invariant nature of the outlier
(Section~\ref{sec:topology}): a single $\gamma_{K,c}$ parameter produces
the same activation at every Q head after GQA expansion.
Batch size and sequence length do not affect the algorithm;
the per-channel $\hat{s}$ is applied element-wise to the token dimension.

\section{Decision-Consistency Metric Definitions}
\label{app:metrics}

\paragraph{Why not aggregate accuracy?}
Aggregate accuracy changes can mask opposing effects.
A quantized model may corrupt some BF16-correct answers (Quant Hurts)
while accidentally fixing some BF16-incorrect answers (Quant Helps).
A model that hurts many samples but helps just as many would show zero
accuracy change while exhibiting substantial decision drift.
We use metrics that separate these effects.

\paragraph{Prediction Flip Rate (PFR).}
\begin{equation}
\text{PFR} = \frac{1}{N}\sum_{i=1}^N
  \mathbf{1}\!\left[\hat{y}_i^q \neq \hat{y}_i^{\text{bf16}}\right]
\end{equation}
PFR measures the fraction of evaluation samples for which the quantized
model's top-1 prediction differs from the BF16 model's prediction.
A BF16 model has PFR\,=\,0 by definition.
PFR is related to but not equal to accuracy change: a flip may move
from a correct BF16 prediction to an incorrect one (a Hurt), or from
an incorrect BF16 prediction to a different incorrect one.

\paragraph{Quant Hurts.}
\begin{equation}
\text{Hurts} = \left|\left\{i :
  \hat{y}_i^{\text{bf16}} = y_i \;\text{and}\;
  \hat{y}_i^q \neq y_i\right\}\right|
\end{equation}
Quant Hurts counts only the regressions: samples the BF16 model answers
correctly that the quantized model does not.
We report Hurts as a count (rather than a rate) because it is bounded
above by the BF16 model's accuracy and is most interpretable as a raw
count of corrupted inferences.

\paragraph{Relation to accuracy.}
Defining Helps\,$=|\{i:\hat{y}_i^{\text{bf16}}\neq y_i,\,\hat{y}_i^q=y_i\}|$,
the accuracy change satisfies $\Delta\text{Acc}=(\text{Helps}-\text{Hurts})/N$.
We emphasize Hurts over $\Delta\text{Acc}$ because Helps often reflect
quantization noise accidentally realigning a borderline wrong answer,
whereas Hurts identify samples where the model has lost a reliable capability.

\paragraph{BF16 as reference.}
All metrics compare against the same BF16 model on identical inputs.
This controls for prompt format, tokenization, and sampling---only
the numerical precision of the attention computation differs.

\section{Outlier Channel Stability (Jaccard)}
\label{app:jaccard}

A central assumption behind calibration-static Smooth-QK deployment is
that the \emph{identity} of the outlier channels---not just their
magnitude---is stable across different inputs.
If a channel is a dominant outlier for one prompt but not another, a
static scale factor computed from calibration data would suppress the
wrong channels at inference time, potentially harming accuracy.

To test this assumption, we run each of our five evaluation models on
eight diverse prompts spanning code generation, mathematical reasoning,
factual question answering, multi-turn dialogue, and narrative text.
For each model and prompt, we collect the top-$N$ channels by maximum
activation magnitude at Layer~0 (where $N = 5\%$ of the total channel
count).
We then compute the pairwise Jaccard similarity between all 28 pairs of
prompt-specific channel sets---where a Jaccard similarity of~1.0 would
indicate the identical set of outlier channels across both prompts, and
0 would indicate no overlap.
High mean off-diagonal Jaccard ($\gg 0.5$) across diverse prompts
indicates that the outlier channel identity is an intrinsic model
property rather than an input artifact.

\begin{figure}[htbp]
\centering
\begin{subfigure}[b]{0.36\linewidth}
\includegraphics[width=\linewidth]{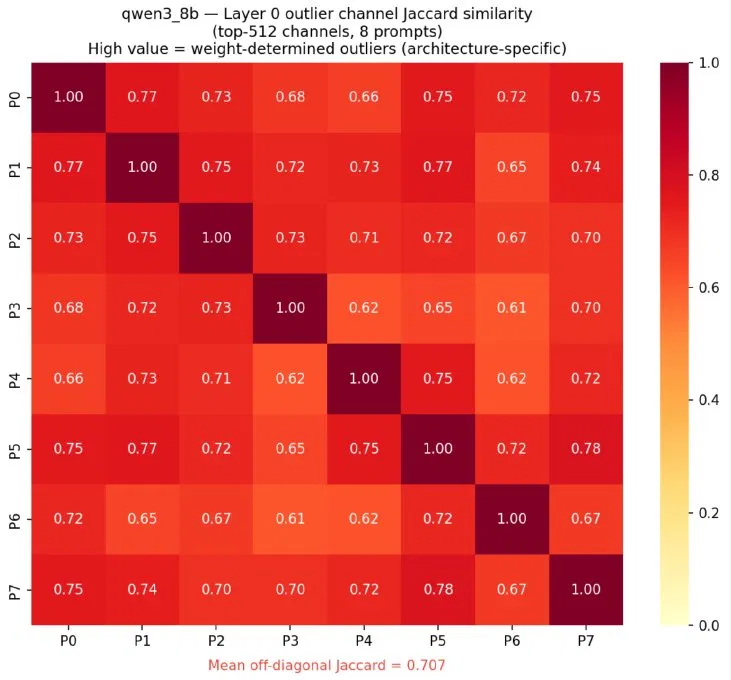}
\caption{Qwen3-8B (0.707)}
\end{subfigure}
\begin{subfigure}[b]{0.36\linewidth}
\includegraphics[width=\linewidth]{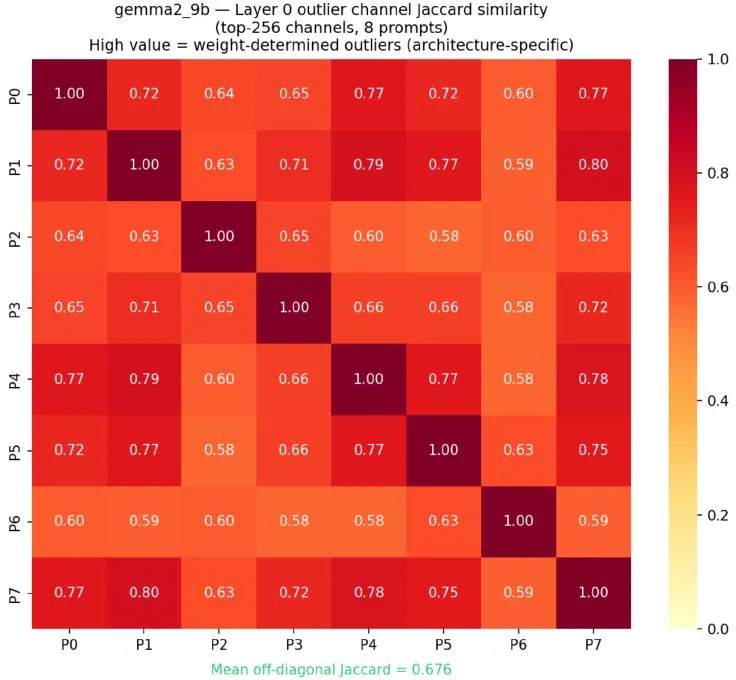}
\caption{Gemma2-9B (0.676)}
\end{subfigure}
\begin{subfigure}[b]{0.36\linewidth}
\includegraphics[width=\linewidth]{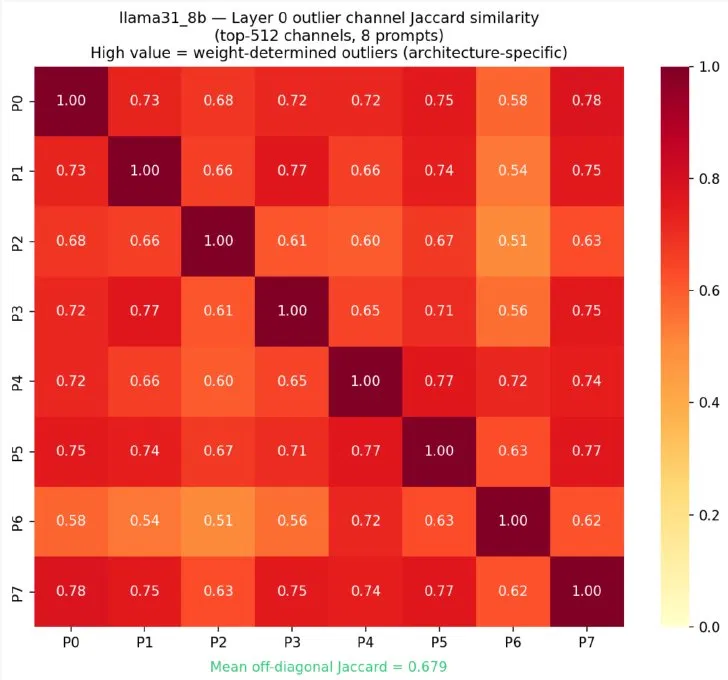}
\caption{LLaMA3.1-8B (0.679)}
\end{subfigure}
\begin{subfigure}[b]{0.36\linewidth}
\includegraphics[width=\linewidth]{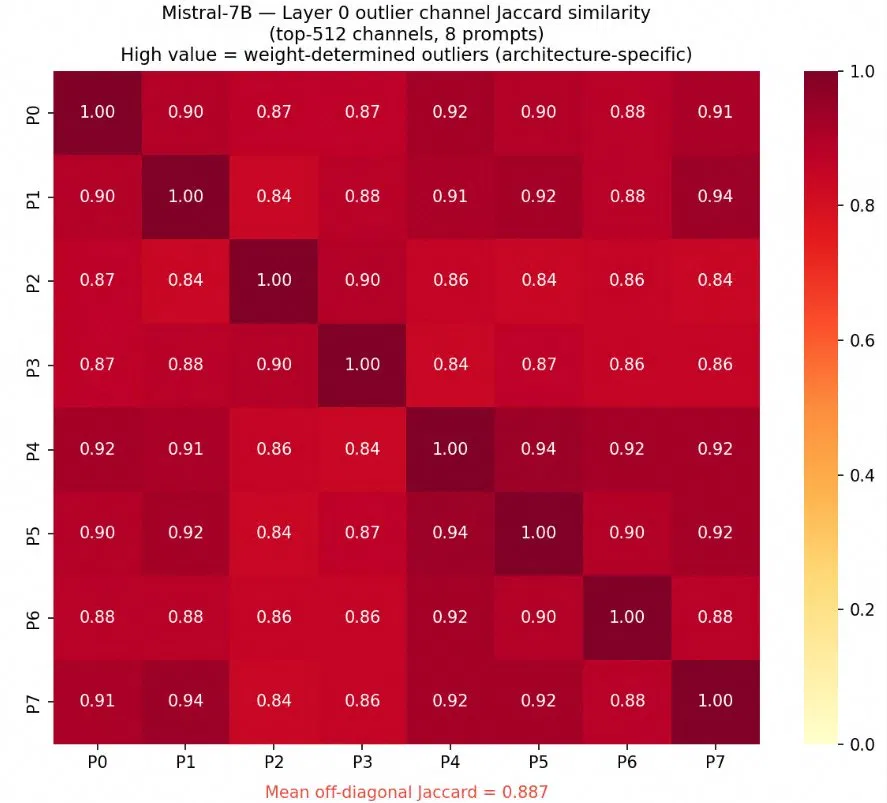}
\caption{Mistral-7B (0.887)}
\end{subfigure}
\caption{Jaccard similarity of top-$N$ outlier channels across 8 prompts.
Mean off-diagonal values of 0.676--0.887 confirm that the same channels
are identified regardless of input, supporting static calibration.
Phi-4B Jaccard is 0.786 (Table~\ref{tab:outlier}).
High Jaccard in Mistral-7B and Phi-4B does not trigger smoothing;
$\rho_K<6$ in both cases routes them to bypass.}
\label{fig:jaccard}
\end{figure}

\section{Smooth-QK Scale Factor by Layer}
\label{app:smooth_dist}

The Smooth-QK calibration procedure computes per-layer, per-channel
scale factors $\hat{s}$ for every model, regardless of the gate
outcome.
Table~\ref{tab:smooth_layers} reports the resulting statistics for
Qwen3-8B, where the correction is most pronounced.
All statistics in this table are RMS-aggregated over the 1024-sample
calibration set (the statistic that drives Eq.~\eqref{eq:sqk}), and are
therefore not directly comparable to the per-slice (single batch element,
single head) max-abs values shown in Figure~\ref{fig:smooth_effect}.

\begin{table}[h]
\centering\small
\caption{K outlier statistics and resulting Smooth-QK scale factor
for Qwen3-8B, grouped by layer depth (calibration-set RMS aggregate).
Layer~0 dominates; deeper layers have near-unity $\hat{s}$ and are
effectively unaffected by smoothing.
Q Expand gives the resulting range of $\max|\tilde{q}_c|/\max|q_c|$
across channels.}
\label{tab:smooth_layers}
\begin{tabular}{lrrrcc}
\toprule
Layer & MaxAbs & P99 & $\rho_K$ & $\hat{s}$ & Q Expand\\
\midrule
0     & 220.8 & 16.9 & 13.1 & 3.72 & 0.56--1.40$\times$\\
1--4  & 10--25& 8--16& 1.3--1.7& 1.1--1.3 & 0.8--1.2$\times$\\
5--35 & 12--35& 9--19& $\sim$1.8& $\sim$1.1& 0.9--1.1$\times$\\
\bottomrule
\end{tabular}
\end{table}

\paragraph{On the different magnitudes reported for the same outlier.}
Three numerical values describing the channel-51 outlier appear in the
paper, and they are deliberately \emph{different measurements} rather than
the same quantity computed three ways.
(i) Table~\ref{tab:smooth_layers} reports MaxAbs $=220.8$ at Layer~0:
this is the calibration-set RMS aggregate of the per-channel maximum and
is the statistic that drives Eq.~\eqref{eq:sqk}; the corresponding
$\hat{s}\!=\!3.72$ would compress it to $\approx$59 ($=220.8/3.72$) under
that scale.
(ii) Figure~\ref{fig:smooth_effect} reports max-abs $=104.5$ (before) and
$=2.516$ (after) for a single fixed slice (one batch element, head~0,
first 64 tokens), where ``after'' is measured directly on the
Smooth-QK-transformed K tensor.
(iii) Figure~\ref{fig:causal} reports $\approx$155 from a single-token
forward trace on a different input, illustrating the per-token
amplification mechanism.
These values differ because the outlier magnitude varies across heads,
tokens, and inputs, and because the three measurements use different
aggregation levels (single-token trace vs.\ single-slice max-abs vs.\
calibration-set RMS aggregate) and were produced by different scripts
(the main inference hook for the figures, a separate calibration script
for Table~\ref{tab:smooth_layers}).
They are therefore not expected to coincide; we annotate each figure and
table with its statistic so the reader can interpret them correctly.
In particular, the after-value $2.516$ in Figure~\ref{fig:smooth_effect}
is the measured result of applying Smooth-QK to that slice and should not
be cross-checked against $220.8/3.72$ or against the
$|K_{\max}|/|Q_{\max}|$ ratio of any other statistic, since it pertains
to a different head and input.

\paragraph{Cross-model comparison.}
The concentration pattern differs substantially across architectures.

\textbf{Qwen3-8B (strong gate):} Layer~0 is the dominant outlier layer
with $\rho_K=13.1$ and $T_{K/Q}=14.5$.
The scale factor at Layer~0 is 3.72, compressing K's maximum from
220.8 to approximately 59 and expanding Q proportionally.
All other layers have near-unity $\hat{s}$ ($\leq1.3$).
This concentrated pattern is consistent with the QK-RMSNorm affine
weight $\gamma_{K,51}\approx34$ being the root cause: it acts only
at the layer where it is defined.

\textbf{Gemma2-9B (mild gate, $\alpha=0.25$):} Layer~0 has $\rho_K=8.1$
(above the $\rho_K\geq6$ threshold) but $T_{K/Q}=2.17$.
Because K does not strongly dominate Q, the gate selects mild smoothing
($\alpha=0.25$) rather than $\alpha=0.5$, transferring less difficulty to
Q. An $\alpha$ sweep on the diagnostic suite confirmed $\alpha=0.25$ as
the best of $\{0.05,0.1,0.25,0.35\}$; relative to Direct HIF4 it lowers
full-set MMLU and HellaSwag Quant Hurts ($445\to323$ and $449\to406$).
The outlier is present but less extreme than in Qwen3-8B.

\textbf{LLaMA3.1-8B (bypass):} $\rho_K$ peaks at only 4.78, below the
$\rho_K\geq6$ smoothing threshold, and there is no single dominant outlier
layer; the outlier ratio is spread more evenly across layers.
Although its $\rho_K$ is the highest among the bypass models, an $\alpha$
sweep ($\alpha\in\{0.05,0.1,0.2,0.25,0.35\}$) shows that enabling
Smooth-QK does not improve, and on some diagnostic benchmarks slightly
degrades, its decision consistency relative to the bypass configuration.
The gate therefore routes LLaMA3.1-8B to bypass, consistent with the
empirical result; this is the boundary case discussed in
Section~\ref{sec:crossmodel}.

\textbf{Mistral-7B and Phi-4B (bypass):}
Both models have low intra-K outlier severity ($\rho_K=3.56$ and $3.69$,
below the $\rho_K\geq6$ threshold) together with $T_{K/Q}<2$.
For Phi-4B, the max-layer $\rho_K=3.69$ occurs at Layer~30,
and the max-layer $T_{K/Q}=1.50$ is at a different layer.
At Layer~0, Phi-4B actually has $T_{K/Q}=0.56$, meaning
\emph{K is smaller than Q}---the reverse of the Qwen3-8B pattern.
This confirms that the outlier topology can differ fundamentally across
architectures, and the gate correctly identifies that no smoothing
should be applied.

\paragraph{Why the calibration procedure covers all layers.}
Even though only Layer~0 of Qwen3-8B has a large $\hat{s}$, the
calibration procedure computes $\hat{s}_c$ for every layer and channel
of every model.
The gate is then applied independently per layer.
For models in bypass mode, all $\hat{s}_c$ values are set to 1.0 at
inference, effectively a no-op.
This design ensures that if future models have outliers at deeper layers,
no algorithmic change is needed: the calibration will find them and the
gate will route them appropriately.

\section{Cross-Model Ablation (Diagnostic Suite)}
\label{app:cross_ablation}

\begin{table}[htbp]
\centering\small
\caption{Stepwise ablation on Gemma2-9B (diagnostic benchmarks;
Mild Smooth-QK at $\alpha=0.25$).
Row~A is Direct HIF4; rows~B--E add the Q-Mean auxiliary.
Rows C and E apply Smooth-QK; rows A, B, D do not and are
$\alpha$-independent.
Fl\,=\,Flip count, Hu\,=\,Quant Hurt count.
The full pipeline (E) is best on TQA and GPQA; on the small ARC set
(295 samples) the SQK-only row C is marginally lower.}
\label{tab:gemma_ablation}
\setlength{\tabcolsep}{3.5pt}
\begin{tabular}{lcc rr rr rr}
\toprule
Config & SQK & PRe
  & \multicolumn{2}{c}{ARC (295)} & \multicolumn{2}{c}{TQA (817)}
  & \multicolumn{2}{c}{GPQA (198)}\\
\cmidrule(lr){4-5}\cmidrule(lr){6-7}\cmidrule(lr){8-9}
&&& Fl & Hu & Fl & Hu & Fl & Hu\\
\midrule
A.\ Direct HIF4    &---&---& 13&7& 99&25& 49&12\\
B.\ +Q-Mean        &---&---&  8&4& 87&23& 35&12\\
C.\ +Smooth-QK     &\checkmark&---& \textbf{4}&\textbf{1}& 70&17& 37&10\\
D.\ +P-Reordering  &---&\checkmark& 9&5& 78&16& 37&14\\
E.\ Both (HiFA4)   &\checkmark&\checkmark& 10&5& \textbf{74}&\textbf{11}& \textbf{33}&\textbf{8}\\
\bottomrule
\end{tabular}
\end{table}

The Gemma2-9B stepwise result corroborates the Qwen3-8B findings.
Relative to Direct HIF4 (row~A), the full HiFA4 configuration (row~E)
reduces Quant Hurts on every diagnostic benchmark, with the largest
effect on TruthfulQA (hurts $25\to11$, a $56\%$ reduction), consistent
with semantically sensitive decisions benefiting most.
Smooth-QK (row~C vs.\ B) and P-Reordering (row~D vs.\ B) each contribute:
Smooth-QK is strongest on ARC and TQA, while P-Reordering also lowers TQA
hurts independently of smoothing.
Combining them (row~E) is best on TQA and GPQA; on ARC, where only 295
samples make the counts noisy, the SQK-only configuration is marginally
lower, but the difference (Hu 1 vs.\ 5) is within run-to-run variation on
a set this size.

\section{Bypass-Model Full-Set Provenance}
\label{app:phi_ablation}

Table~\ref{tab:bypass_full} in the main text reports the three bypass
models (LLaMA3.1-8B, Mistral-7B, Phi-4B) on the full MMLU and HellaSwag
sets, comparing the Direct HIF4 baseline (no corrections) against the
bypass HiFA4 configuration (Smooth-QK disabled, $\hat{s}=1$; P-Reordering
and Q-Mean active).
Because Smooth-QK is off, the Direct$\to$HiFA4 improvement isolates the
joint effect of P-Reordering and the Q-Mean auxiliary on models without a
concentrated K-outlier.
All three models show Quant-Hurt reductions of 39--52\% on both datasets
(LLaMA3.1-8B MMLU $976\to468$, HellaSwag $1215\to745$; Mistral-7B MMLU
$696\to408$, HellaSwag $614\to355$; Phi-4B MMLU $943\to493$, HellaSwag
$632\to369$).
The hook-coverage logs for these runs (e.g.\ Phi-4B HellaSwag 321{,}344
calls, MMLU 449{,}344 calls) confirm that all attention computation was
routed through the C4V16 operator (Appendix~\ref{app:setup}).

\section{Ablation Sub-Result Provenance}
\label{app:ablation_status}

Table~\ref{tab:ablation} in Section~\ref{sec:ablation} reports a
stepwise component ablation on Qwen3-8B.
This appendix documents the evaluation setup for each row to ensure
reproducibility.

\paragraph{MMLU (14{,}042 samples).}
All five configurations (A--E) were evaluated on the full MMLU dataset.
Each run uses the complete standard MMLU evaluation split
(14{,}042 questions across 57 subjects).
Rows A and E match the values reported in Tables~\ref{tab:consistency}
and~\ref{tab:cross} for Direct HIF4 and HiFA4, respectively.

\paragraph{HellaSwag (10{,}042 samples).}
All five configurations were evaluated on the full HellaSwag
validation set.
Rows A and E match the corresponding values in Table~\ref{tab:consistency}.

\paragraph{ARC-Challenge (295 samples) and TruthfulQA (817 samples).}
All five configurations were evaluated on the complete ARC-Challenge
and TruthfulQA evaluation sets.
Row~E values match Table~\ref{tab:consistency} exactly.

\paragraph{Why Q-Mean (row B) is held fixed.}
The goal of the ablation is to isolate the contributions of
Smooth-QK and P-Reordering, which are the two mechanisms proposed
in this paper.
Q-Mean is adopted from~\citet{zhang2025sage2} and is not a contribution
of this work.
Including it as a fixed component in rows B--E ensures that the
incremental differences between rows reflect only our proposed
mechanisms.
Row~A (Direct HIF4, no corrections) provides the absolute baseline.
Row~B (+Q-Mean only) provides the baseline after applying the
inherited component, against which our mechanisms are compared.

\paragraph{Gemma2-9B stepwise component effect.}
The stepwise component effect for Gemma2-9B (diagnostic suite: ARC,
TruthfulQA, GPQA) is reported in Table~\ref{tab:gemma_ablation}
in Appendix~\ref{app:cross_ablation}.

\section{Instruction-Scheduling Latency Model}
\label{app:latency_model}

This appendix reports the quantitative instruction-scheduling analysis
summarized in Section~\ref{sec:latency}.
All figures are theoretical estimates derived from an Ascend NPU
cycle-count model at $T=40{,}960$ tokens; \textbf{no on-hardware
measurements are available at this time} and on-hardware validation is
deferred to a future version.

\begin{figure}[htbp]
\centering
\includegraphics[width=0.58\linewidth]{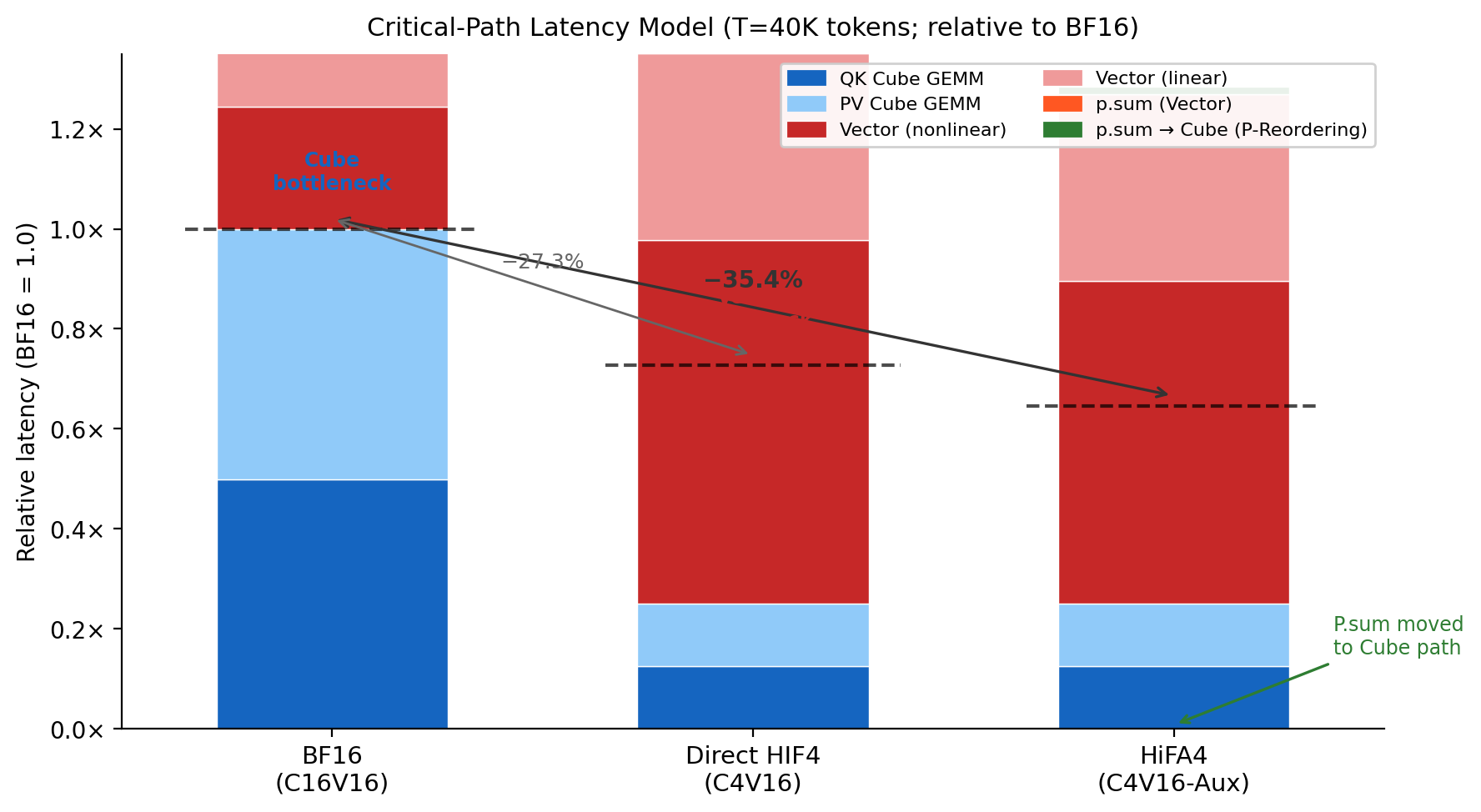}
\caption{Relative critical-path latency (BF16\,=\,1.0), from the
instruction-scheduling model described in this appendix.
HIF4 quantization reduces Cube cost but shifts the bottleneck to the
vector path.
P-Reordering moves the normalizer accumulation into the Cube GEMM,
reducing vector-path cost; the model projects a 35.4\% overall
reduction relative to BF16.
All figures are theoretical estimates; on-hardware validation is deferred.}
\label{fig:latency}
\end{figure}

\begin{table}[htbp]
\centering\small
\caption{Relative latency per stage ($T=40{,}960$ tokens;
all values normalized to BF16 = 1.0).
All figures are instruction-scheduling theoretical estimates from a
preliminary Ascend NPU scheduling analysis; on-hardware validation is deferred
(Section~\ref{sec:limitations}).}
\label{tab:latency}
\begin{tabular}{lrrrll}
\toprule
Variant & QK+PV Cube & Softmax & Other VL & Bottleneck & Relative\\
\midrule
BF16 (C16V16)     & 1.00 & 0.49 & 0.85 & Cube   & 1.00\\
Direct (C4V16)    & 0.25 & 1.46 & 0.75 & Vector & 0.73\\
\textbf{HiFA4}    & 0.25 & 1.29 & 0.75 & Vector & \textbf{0.65}\\
\bottomrule
\end{tabular}
\end{table}

The key mechanism is that 4-bit Cube GEMMs reduce the $QK^T$+$PV$
Cube cost to 25\% of BF16.
However, this shifts the critical-path bottleneck to the vector path
(softmax, rescaling, normalizer reduction).
P-Reordering fuses the normalizer accumulation into the Cube GEMM
(Section~\ref{sec:preorder}), shortening the Softmax vector-path cost
from 1.46 to 1.29 (normalized), resulting in a projected overall
reduction from 0.73 to 0.65 (35.4\% relative to BF16 = 1.0).

\end{document}